\newcounter{ALC@tempcntr}
\theoremstyle{plain}
\newtheorem{theorem}{Theorem}
\newtheorem{proposition}{Proposition}
\newtheorem{lemma}{Lemma}
\newtheorem{remark}{Remark}
\newtheorem{scheme}{Scheme}
\newtheorem{assumption}{Assumption}
\newcommand{\beq}{\begin{eqnarray}}
\newcommand{\eeq}{\end{eqnarray}}
\newcommand{\field}[1]{\mathbb{#1}}
\newcommand{\R}{\field{R}}
\newfont{\bbb}{msbm10 scaled 500}
\newfont{\bb}{msbm10 scaled 1100}
\newcommand{\bv}{{\bf b}}
\newcommand{\cv}{{\bf c}}
\newcommand{\gv}{{\bf g}}
\newcommand{\mv}{{\bf m}}
\newcommand{\wv}{{\bf w}}
\newcommand{\xv}{{\bf x}}
\newcommand{\yv}{{\bf y}}
\newcommand{\zv}{{\bf z}}
\newcommand{\Ac}{{\cal A}}
\newcommand{\Bc}{{\cal B}}
\newcommand{\Cc}{{\cal C}}
\newcommand{\Hc}{{\cal H}}
\newcommand{\Lc}{{\cal L}}
\newcommand{\Pc}{{\cal P}}
\newcommand{\Rc}{{\cal R}}
\newcommand{\Sc}{{\cal S}}
\newcommand{\Tc}{{\cal T}}
\newcommand{\Uc}{{\cal U}}
\newcommand{\remove}[1]{}
\newcommand{\avg}{{\mathbb E}}
\newcommand\reals{{\mathbb R}}
\theoremstyle{definition}
\theoremstyle{remark}
\newcommand{\latexe}{{\LaTeX\kern.125em2%
                      \lower.5ex\hbox{$\varepsilon$}}}
\chardef\bslash=`\\	
\def\square{\RIfM@\bgroup\else$\bgroup\aftergroup$\fi
\vcenter{\hrule\hbox{\vrule\@height.6em\kern.6em\vrule}
\hrule}\egroup}\makeatother\makeindex
\definecolor{OXO-emph}{RGB}{153,0,0}
\DeclareMathAlphabet{\mathpzc}{OT1}{pzc}{m}{it}
\setlist[itemize]{leftmargin=0.1in}
\setlist[enumerate]{leftmargin=0.1in}
\title{Robust Gradient Descent via Moment Encoding and LDPC Codes}
\title{Robust Gradient Descent via Moment Encoding with LDPC Codes}
\author[1]{Raj Kumar Maity}
\author[1,2]{Ankit Singh Rawat}
\author[1]{Arya Mazumdar}
\affil[1]{\normalsize College of Information and Computer Sciences, University of Massachusetts Amherst, Amherst, MA 01003, USA.}
\affil[2]{\normalsize Research Laboratory of Electronics, MIT, Cambridge, MA 02139, USA.\newline E-mail: rajkmaity@cs.umass.edu, asrawat@mit.edu, arya@cs.umass.edu.}
\begin{document}

\maketitle


\begin{abstract}
This paper considers the problem of implementing large-scale gradient descent algorithms in a distributed computing setting in the presence of {\em straggling} processors. To mitigate the effect of the stragglers, it has been previously proposed to encode the data with an erasure-correcting code and decode at the master server at the end of the computation. We, instead, propose to encode the second-moment of the data with a low density parity-check (LDPC) code. The iterative decoding algorithms for LDPC codes have very low computational overhead and the number of decoding iterations can be made to automatically adjust with the number of stragglers in the system. We show that for a random model for stragglers, the proposed moment encoding based gradient descent method can be viewed as the stochastic gradient descent method. This allows us to obtain convergence guarantees for the proposed solution. Furthermore, the proposed moment encoding based method is shown to outperform the existing schemes in a real distributed computing setup.
\end{abstract}

\section{Introduction}
\label{sec:intro}

The vast volumes of data at our disposal today has made many useful data-driven tasks possible, which were otherwise thought to be infeasible. The large scale of the data necessitates working with distributed computing setups as the computational power available at a single location is not sufficient to meet the strict performance requirements in many real-life systems. Moreover, in many settings, local compute and storage resources cannot simply accommodate the entire data being processed. 

As a general principle, the large-scale distributed computing setups (e.g.,~\cite{MapReduce, Spark}) divide the original problem at hand into many small tasks, which are assigned to many servers, namely workers. The master server then collects the outcomes of local computation at the workers (potentially over multiple rounds) and computes the final result. In practical systems, the process of collecting the outcomes from the workers is prone to unpredictable delays~\cite{DeanL13}. Such delays arise due to various reasons, including the slow-down at the workers and the congestion in the communication networks in the system. The workers that cannot provide the outcome of their local computation within a reasonable deadline due to these delays are termed {\em stragglers}. The presence of the stragglers can significantly degrade the performance of the system. Therefore, it is imperative that we address the variability in the response times of different components of the setup during the design of the computations tasks.

Multiple recent works explore the problem of mitigating the effect of stragglers. The replication schemes assign each task to multiple servers~\cite{ShahLR16,AnGSS13,WJW15}. This ensures that the task gets completed without significant delay if at least one of the servers processing the task is non-straggler. In \cite{LLPPR15}, Lee et al. explore the coding theoretic ideas that go beyond the replication schemes to address the issue of stragglers. In particular, they focus on linear computation, namely a matrix-vector product, and proposes to encode the columns of the matrix by a maximum distance separable (MDS) code to obtain a taller encoded matrix. The rows of the encoded matrix are distributed among the workers, who are responsible for computing the inner product of the rows assigned to them with the vector in question. The redundancy among the rows of the encoded matrix allows for computation of the intended matrix-vector product even if some of the servers fail to respond with the computation assigned to them. In \cite{DuttaCG16}, Dutta et al. further explore the problem of reliably computing a matrix-vector product while additionally requiring that the rows of the encoded matrix are sparse. This aims at reducing the computation at the workers and communication between the master and the workers which scale with the row-sparsity of the encoded matrix. The similar ideas for other computational tasks (e.g., matrix-matrix product and convolution between vectors) have been explored in \cite{LeeSR17, YuMA17, DuttaCG17}. Bitar et al. propose a scheme to securely compute matrix-vector product without revealing any information about the matrix to the workers~\cite{BitarPR17}. Another line of work that aims at minimizing communication during data shuffling by using coding techniques is presented in \cite{LiMA15, LiMYA17, LiMA16, LLPPR15} and reference therein.

\paragraph{Our contributions.}
We focus on the problem of fitting a structured linear model to the given data. In particular, given the features or data points $\{\xv_i\}_{i \in [m] :=\{1,\ldots, m\}} \subset \reals^{k}$ and the associated labels $\{y_i\}_{i \in [m]}\subset \reals$, we want to learn the model parameter $\bm{\theta}^{\ast}$ belonging to a structured set $\Theta \subset \reals^{k}$ so that
$
y_i = \xv_i^T\bm{\theta}^{\ast} + \epsilon_i,
$
for small modeling errors $\{\epsilon_i\}_{i \in [m]}$. In many applications, the prior knowledge about the structure of the model parameter (such as, sparsity and group sparsity) can be expressed with the help of a {\em regularizer} $\Rc:\reals^k \to \reals$ so that $\Theta \equiv \{\bm{\theta} \in \reals^k :  \Rc(\bm{\theta}) \leq R\}$, for $R \in \reals$. In such settings, the task of recovering $\bm{\theta}^{\ast}$ can be realized by solving the following optimization problem. 
\begin{align}
\label{eq:gLasso}
\min_{\bm{\theta}} \frac12 \sum_{i=1}^m (y_i -  \xv_i^T\bm{\theta})^2  \quad \text{ subject to } \quad \bm{\theta} \in \Theta = \{\bm{\theta}' \in \reals^k :  \Rc(\bm{\theta}') \leq R\}.
\end{align}
Note that the square loss -- employed in the optimization problem above-- is one of the most pervasive loss functions in machine learning, optimization, and statistics. A large class of estimation problems arising in practice, such as compressed sensing~\cite{figueiredo2007gradient}, dictionary learning \cite{mairal2009online}, and matrix completion \cite{koltchinskii2011nuclear}, can be solved as special cases of the general optimization problem outlined in \eqref{eq:gLasso}~\cite{tibshirani2015statistical}. 

Even though, we focus on the constrained optimization problem, our proposed solution easily extends to the unconstrained optimization problem $
\min_{\bm{\theta} \in \reals^k} \sum_{i=1}^m (y_i -  \xv_i^T\bm{\theta})^2/2 + \lambda\cdot\Rc(\bm{\theta}),  
$ 
where we incorporate the regularizer in the objective function with the help of a regularization parameter $\lambda \in \reals$.

We note that our proposed solution can also be employed to recover the structured model parameters for single-index or generalized linear models~\cite{kakade2011efficient}, where the given data fits a model of the form:
$
y_i = g(\xv_i^T\bm{\theta}^{\ast}) + \epsilon_i,
$
with $g:\reals\to \reals$ denoting a possibly unknown nonlinear link function. In this setting, the model parameter can again be recovered by solving a generalized LASSO in \eqref{eq:gLasso}~\cite{plan2016generalized}. 
We employ the {\em projected gradient descent} (PGD) method to solve the underlying optimization problem. In a distributed computing setup, the iterative optimization procedure is implemented as follows. The master server maintains an estimate for the model parameter. In each step, the master sends the current estimate to the workers. The workers then compute a partial value of the gradient based on the received estimate and send the outcome of their computations to the master. By combining the messages received from the {\em non-straggling} workers, the master computes the gradient and updates its current estimate for the model parameter. In this paper, our first contribution is to propose a preprocessing step which encodes the {\em second moment} of the data and then distributes the encoded moments among the workers. This way, there is some redundancy among the outcomes of the computation at the workers, which allows the master to obtain a good enough estimate of the gradient even when it does not receive the outcome of the computation assigned to the stragglers. 

We employ the low-density parity check (LDPC) codes to encode the second moment of the data points. As a result, the task of calculating the gradient at the master reduces to the task of decoding an LDPC codeword in the presence of {\em erasures}, where the erased locations depend on the identities of the stragglers. The reason for working with LDPC codes is that the iterative decoding algorithms for these codes provide us three-fold benefits. The decoder has very low computational complexity and can automatically adjust to the number of the stragglers with a small number of decoding iterations required if there are not too many stragglers present. Additionally, we can use the number of decoding iterations as a tuning parameter. Depending on the number of stragglers, we can run only those many decoding iterations that are sufficient to ensure the desired quality of the estimate of the gradient. In our setup, the number of erased coordinates of the gradient vector serves as a measure of its quality. Note that this measure is a {\em non-increasing} function of the number of decoding iterations. Finally, the MDS code based solutions provided in prior literature (such as, \cite{LLPPR15,YuMA17}) suffer from the issue of noise-stability resulting from the low condition number of Vandermonde matrices, which we bypass by considering LDPC matrices.

Furthermore, we show that for a random model for stragglers, the PGD method with the proposed moment encoding scheme can be viewed as the projected stochastic gradient descent (PSGD) method. We then use the convergence analysis for the PSGD method to establish the convergence guarantee for our proposed solution. This analysis clearly characterizes the advantage over non-redundant or replication based gradient descent method in terms of the decoding iterations employed in each step of the method. We also conduct a detailed performance evaluation of our solution on a real-life distributed computing framework (swarm2) at the University of Massachusetts Amherst  \cite{swarm2}. The performance results show that, as compared to the existing schemes, our proposed solution requires a smaller number of gradient steps in order to converge to the correct model parameter.
%

\paragraph{Comparison with other relevant works.}
In \cite{LLPPR15}, Lee et al.  focus on performing iterative gradient descent method in a distributed manner via repeatedly invoking their solution for coded computation of matrix-vector product. In this paper, we also rely on the coded computation of matrix-vector product to realize iterative gradient descent in a straggler tolerant manner. However, we encode the second moment matrix as opposed to the plain data matrix as done in \cite{LLPPR15}. This leads to reduced communication rounds. Furthermore, this also makes the analysis of the optimization procedure completely different from that in \cite{LLPPR15}. As another novel contribution, we utilize LDPC codes which, as discussed above, allow for both efficient decoding and control over the quality of the (approximate) gradient computed in each step of the optimization procedure. In \cite{KarakusSDY17}, Karakus et al. also study the problem of recovering the model parameter of a linear model by solving an alternative optimization problem where they encode both data points and their labels by the matrices with maximal (pairwise) incoherent columns. Again, our approach differs from theirs as we solve the original optimization problem itself and rely on moment encoding as opposed to data encoding. 

In \cite{TLDK17}, Tandon et al. propose a novel framework, namely {\em gradient coding}, to counter the effect of stragglers on the performance of the gradient descent method. The gradient coding framework is designed for general loss functions which decompose over the data points. The gradient coding essentially relies on replication by cleverly assigning the data points to multiple workers to evaluate partial gradients. The specific designs for the replication among servers in the gradient coding framework along with their performance analysis are presented in \cite{RavivTTD17, CharlesPE17, HalbawiRSH17}. {\color{black}Here, we note that the square loss that we consider does have the additive structure. However, employing the (replication based) gradient coding framework to square loss leads to inefficient utilization of the compute and the communication resources.} In \cite{YangGK17}, Yang et al. also study the iterative methods to solve linear inverse problems in the presence of stragglers. However, their setup significantly differs from our setup. In \cite{YangGK17}, multiple instances of the gradient descent method are run on different machines in a redundant manner such that each machine is responsible for locally solving an entire instance. Whereas in our setup, a single instance of the linear inverse problem is solved by a network of servers and each server communicates its partial results in each step of the gradient descent method. There exists a large literature dealing with various issues other than the stragglers in the context of distributed optimization and learning. We refer the readers to \cite{LLPPR15} for an excellent exposition of the literature. 

\paragraph{Organization.} We present the exact problem formulation along with the necessary background in Section~\ref{sec:sys}. We present the main contribution of this paper in Section~\ref{sec:EOP} where we describe the moment encoding based optimization scheme along with its convergence analysis. In Section~\ref{sec:simulations}, we perform an extensive evaluation of the proposed scheme in a real-life distributed computing setup and compare it with the prior work. We present a list of notations in Table~\ref{tab:comparison} for ease of reading.

\bgroup
\def\arraystretch{1.55}
\begin{table*}[ht!]
\caption{List of notation}
\label{tab:comparison}
\footnotesize
\centering
\begin{tabular}{|c|l|}
\hline \hline
$m$ & Number of samples/data points  \\
\hline
$k$  & Dimension of samples \\
\hline 
$(N,K)$ & Length and dimension of the employed code \\
\hline 
$n = N\frac{k}{K}$  & Length of the encoded vectors \\
\hline
$w$ & Number of worker servers \\
\hline 
$\alpha  = \frac{n}{w}$ & Number  of rows mapped to each server\\
\hline
$\bm{\theta}$ & Model parameter to be learnt \\
\hline
$\ell\big((y, \xv\big), \bm{\theta})$ & Loss associated with $\bm{\theta}$ for data point $\xv$ and label $y$\\
\hline
$\Lc(\bm{\theta})$ & Total empirical loss associated with $\bm{\theta}$ \\
\hline
$d$ & Index for LDPC decoding itrations \\
\hline 
 $D$ & Number of iteration of LDPC decoding during each gradient descent step \\
 \hline 
 $t$ & Index for gradient descent steps \\
 \hline 
 $T$ & Number of gradient descent steps \\
 \hline
 $\eta_t$ & Learning rate for gradient descent \\
 \hline 
 $\Sc_{t}$ & Set of servers available during the $t$-th gradient descent step \\
 \hline 
 $s$ & Number of stragglers \\
 \hline 
\end{tabular}
\end{table*}
\egroup


\section{System model and background}
\label{sec:sys}

Our distributed computing setup has $w$ {worker servers} and one {master server}. Performing large-scale computation in this setup involves dividing the desired computation problem into multiple small computation tasks that are assigned to the workers. The master then collects the outcomes of the tasks mapped to the workers and produces the final result. The overall computation may require multiple rounds of communication among the master and the workers. 

We are given $m$ data samples or feature vectors $\{\xv_i\}_{i \in [m]} \subset \R^k$ and their labels $\{y_i\}_{i \in [m]} \subset \R$. In this paper, we are mainly concerned with learning a structured linear model. In particular, we are interested in learning a vector $\bm{\theta}^{\ast} \in \Theta \equiv \{\bm{\theta}' \in \reals^k : \Rc(\bm{\theta}') \leq R\}$, for some regularizer $\Rc: \reals^k \to \reals$, such that the following total empirical loss is minimized.
\begin{align}
\label{eq:loss}
\Lc(\bm{\theta}) = \frac{1}{2}\|\mathbf{y} - X\bm{\theta}\|^2_2 
= \frac{1}{2}\sum_{i = 1}^{m} \big(y_i - \xv^T_i \bm{\theta}\big)^2,
\end{align} 
where $\yv = (y_1, y_2,\ldots, y_m)^T \in \R^{m}$ and $X = (\xv_1~\xv_2~\cdots~\xv_m)^T \in \R^{m \times k}$. Note that the gradient of the total empirical loss with respect to $\bm{\theta}$ has the following form.
\begin{align}
\label{eq:gradient}
\nabla_{\bm{\theta}}\Lc(\bm{\theta}) = \big(X^TX\bm{\theta} - X^T\yv\big).
\end{align}
In this paper, we rely on the PGD method to solve the underlying constrained optimization problem, which iteratively updates an estimate of $\bm{\theta}^{\ast}$.
Specifically, at the $t$-th step, the estimate $\bm{\theta}_t$ has the form
\begin{align}
\label{eq:PGD_iter}
\bm{\theta}_{t} = P_{\Theta}\big(\bm{\theta}_{t-1} - \eta_{t} \nabla_{\bm{\theta}}\Lc(\bm{\theta}_{t-1})\big),
\end{align}
where $\eta_t$ is the learning rate at the $t$-th step, which may potentially be independent of $t$; and the projection operator $P_{\Theta}:\reals^k \to \reals^k$ is defined to be
$\bm{\theta} \mapsto \arg \min_{\tilde{\bm{\theta}} \in \Theta}\ \|\bm{\theta} -\tilde{\bm{\theta}}\|_2^2.$

{\color{black}
\begin{remark}
In our proposed scheme, the master performs the projection step in \eqref{eq:PGD_iter}. Thus, we are mainly interested in the regularizers with computationally efficient projection operations. This is particularly true for decomposable regularizers, such as sparsity constraints.
\end{remark}
}

\paragraph{Preliminary: Linear codes.}
In this paper, we rely on linear codes to perform the overall computation on a distributed computing setup in redundant manner. The redundancy allows the master to realize the original computation task in a straggler tolerant manner. An $(n, k)$ linear code is simply a subspace of dimension $k$ belonging to an $n$-length vector space. In this paper, we focus on the vector space defined over the real numbers $\R$. Therefore, an $(n,k)$ linear code $\Cc$ forms a $k$-dimensional subspace in $\R^n$. Given an $k$-length {\em message vector} $\xv \in \R^{k}$, it can be {\em encoded} (or mapped) to a {\em codeword} from the code $\Cc$ with the help of a generator matrix $G \in \R^{n \times k}$ as 
$\cv = G\xv  \in \Cc.$
Thus, a linear code can be defined as 
$\Cc := \{ \cv \in \R^{n} : \cv = G\xv~\text{for some}~\xv \in \R^k\}.$
 Alternatively, a linear code can also be defined by a parity check matrix $H \in \R^{(n-k) \times n}$ as follows
$ \Cc := \{ \cv \in \R^{n} : H\cv = 0\}.$
 A generator matrix leads to a {\em systematic} encoding, if for each $\xv \in \R^{k}$, the message vector $\xv$ exactly appears as $k$ coordinates of the associated codeword $\cv = G \xv$. The redundancy introduced by mapping a $k$ dimensional vector $\xv$ to an $n$-dimensional vector $\cv$ with  $n > k$ allows one to recover $\xv$ from $\cv$ even when some of the coordinates of $\cv$ are missing. In particular, if the code $\Cc$ has minimum distance $d_{\min}$, then $\xv$ can be recovered 
 even if any $d_{\min}-1$ coordinates of $\cv$ are not available.

 \subsection{The data coding method of~\cite{LLPPR15} and the gradient coding approach of~\cite{TLDK17}}

An approach to run gradient descent in a distributed system using reliable distributed matrix multiplication as a building block was recently presented by Lee et al. \cite{LLPPR15}.
Note that, in the linear regression problem, computing the gradient of the total empirical loss involves computation of two matrix-vector products in each iteration (see \eqref{eq:gradient}), namely:
$
X\bm{\theta}_{t-1}$   and $X^T(X\bm{\theta}_{t-1}-\yv).
$
In  \cite{LLPPR15}, an   MDS-coded distributed algorithm for matrix multiplication was proposed. In this algorithm, to perform the matrix-vector product $A\xv$, the matrix $A$ is premultiplied by the generator matrix $G$ of an MDS code of proper dimensions to get $\tilde{A} = GA$. Each worker node then performs a single
 inner product (or a set of inner products) involving a row of $\tilde{A}$ and $\xv$. The results of these local computations are then sent to the master node.
 As long as the number of workers that successfully deliver their local computations within the deadline is more than a specified threshold (in other words, as long as the number of stragglers is within the erasure correcting capability of the MDS code given by $G$), the product $A\xv$ can be found at the master node. In each iteration of the gradient descent, the above matrix-vector product protocol is applied twice (see \cite{LLPPR15} for details) to compute $X\bm{\theta}_{t-1}$  and $X^T(X\bm{\theta}_{t-1}-\yv).$ This facilitates computation of the gradient in each iteration in the presence of the stragglers.

In \cite{TLDK17}, Tandon et al. propose a novel framework to exactly compute gradient of the underlying loss function in a distributed computation setup.
In particular, they consider a generic loss function which takes the following additive form. 
$\Lc(\bm{\theta}) = \sum_{i = 1}^{m}\ell\big((y_i, \xv_i), \bm{\theta}\big). $
For such a loss function, its gradient can be obtained as
\begin{align}
\label{eq:gen_gradient}
\nabla_{\bm{\theta}}\Lc(\bm{\theta}) = \sum_{i = 1}^{m} \nabla_{\bm{\theta}}\ell\big((y_i, \xv_i), \bm{\theta}\big).
\end{align}
In order to compute the gradient in a distributed manner, the samples and the corresponding labels are distributed among $w$ workers in a redundant manner. For $i \in [w]$, the samples and labels allocated to the $i$-th worker server are indexed by the set $\Ac_i \subseteq [m]$. 
Given the samples and labels indexed by the set $\Ac_i$, the $i$-th worker can compute the following components of the gradient (cf.~\eqref{eq:gen_gradient}).
\begin{align}
\label{eq:gradient_blocks}
\Bc_i := \big\{\nabla_{\bm{\theta}}\ell\big((y_j, \xv_j), \bm{\theta}\big)\big\}_{j \in \Ac_i} \subset \R^k.
\end{align}
Now, the $i$-th worker transmits a linear combination of the blocks in $\Bc_i$ to the master. In particular, the transmitted block can be represented as follows. 
\begin{align}
\label{eq:tx_gradient}
\zv_i = \sum_{j \in \Ac_i}b_{i, j}\nabla_{\bm{\theta}}\ell\big((y_j, \xv_j), \bm{\theta}\big) \in \R^k.
\end{align}
Equivalently, the transmitted blocks from all $w$ workers can be represented as the following $w \times k$ matrix.
\begin{align}
&Z = (\zv_1, \ldots, \zv_w)^T \nonumber \\
&= B \big(\nabla_{\bm{\theta}}\ell((y_1, \xv_1), \bm{\theta})~\cdots~\nabla_{\bm{\theta}}\ell((y_m, \xv_m), \bm{\theta})\big)^T,
\end{align}
where $B$ is an $w \times m$ matrix containing the coefficients associated with the transmission from $w$ workers (cf.~\eqref{eq:tx_gradient}). Note that, for $i \in [w]$, the support of the $i$-th row of the matrix $B$ is contained in the set $\Ac_i$.

Let $\Sc \subset [w]$ denote the set of indices of the workers that successfully deliver their local computations within the deadline. Assuming that we have $s$ straggling workers which do not respond with their intended transmission before the deadline, we have $|\Sc| = w - s$. Note that the master has following information at its disposal. 
\begin{align}
\label{eq:rx_gradient}
Z_{\Sc} = B_{\Sc} \big(\nabla_{\bm{\theta}}\ell((y_1, \xv_1), \bm{\theta})~\cdots~\nabla_{\bm{\theta}}\ell((y_m, \xv_m), \bm{\theta})\big)^T, 
\end{align}
where $Z_{\Sc}$ and $B_{\Sc}$ denote the sub-matrices formed by the rows indexed by $\Sc$ in $Z$ and $B$, respectively. In order to be able to obtain the gradient
\begin{align*}
\nabla_{\bm{\theta}}\Lc(\bm{\theta}) &= \sum_{i = 1}^{m} \nabla_{\bm{\theta}}\ell((y_i, \xv_i), \bm{\theta}) \nonumber \\
&= (1,\ldots, 1)\cdot \big(\nabla_{\bm{\theta}}\ell((y_1, \xv_1), \bm{\theta})~\cdots~\nabla_{\bm{\theta}}\ell((y_m, \xv_m), \bm{\theta})\big)^T,
\end{align*}
we require that the all ones vector $(1,\ldots, 1)$ belongs to the subspace spanned by the rows of the matrix $B_{\Sc}$. {\em Therefore, the design criterion in the gradient coding approach~\cite{TLDK17} is to find an allocation of the samples $\{\Ac_i\}_{i \in [s]}$ and the associated transmission matrix $B$ such that for every $\Sc \subset [s]$ with $|\Sc| = w - s$, all ones vector $(1,\ldots, 1)$ belongs to the row-space of the matrix $B_{\Sc}$.}

Our computing method crucially differs from both of the  schemes of \cite{TLDK17} and  \cite{LLPPR15}. Instead of encoding the matrices $X$ and $X^T$ with MDS codes we use a single code to encode the matrix $X^TX$, the second moment of the data.


\section{Encoding second moment~:~Optimization with approximate gradient}
\label{sec:EOP}

We exploit the special structure of the gradient 
of the square loss (cf.~\eqref{eq:gradient}) to devise a scheme to deal with stragglers. The proposed scheme is more efficient as compared to the gradient coding approach~\cite{TLDK17}  and the reliable distributed matrix multiplication based scheme~\cite{LLPPR15} (cf.~supplementary material). 
Recall the gradient of the total empirical loss associated with the square loss function from \eqref{eq:gradient}. 
Note that we need to compute the term $X^T\yv$ only once at the beginning of the optimization procedure as it is independent of the optimization parameter $\bm{\theta}$. 
By using the notation $M = X^TX$  and $\bv = X^T\yv$, 
the $t$-th step of the PGD method takes the following form (cf.~\eqref{eq:PGD_iter}). 
\begin{align}
\label{eq:gradient_des}
\bm{\theta}_{t} &= P_{\Theta}\big( \bm{\theta}_{t-1} - \eta_{t} \nabla_{\bm{\theta}}\Lc(\bm{\theta}_{t-1})\big) = P_{\Theta}\big(\bm{\theta}_{t-1} - \eta_{t}(M\bm{\theta}_{t-1} - \bv)\big).
\end{align}
where $\bm{\theta}_{t}$ denotes the estimate of $\bm{\theta}^{\ast}$ at the end of $t$-th step. 

\subsection{Exact computation of gradient in each step}
\label{sec:exact_moment}

Now, in order to perform the projected gradient descent  in a distributed computation setup, we distribute the task of computing matrix-vector product $M\bm{\theta}_t$ among the $w$ workers. In particular, we encode the $k \times k$ matrix $M$ using a linear code. The encoded matrix can be used to generate redundant  tasks for workers which subsequently enable us to mitigate the effect of stragglers.


\begin{scheme}[{Exact gradient computation using linear codes}:] 
\label{sch:exact}
Given the matrix $M = X^TX$ and an $({N = w}, K)$ linear code\footnote{For the ease of exposition, we assume that  $K$ divides $k$.} $C$, the gradient computation for each step of the optimization procedure is realized as below.
\begin{itemize}[noitemsep,topsep=0pt,parsep=0pt,partopsep=0pt]
\item Let $\mv_1, \ldots, \mv_{k}$ denote the $k$ rows of the matrix $M = X^TX$. Let $\Pc_1, \Pc_2,\ldots, \Pc_{{k}/{K}} \subset [k]$ represent a partition of the set indices for these rows $[k]$ such that 
$\Pc_i \cap \Pc_j = \emptyset~~\text{for}~i \neq j~\text{and}~|\Pc_i| = K~~\forall~i \in [{k}/{K}].$
\item For each $i \in [{k}/{K}]$, we encode the $K \times k$ matrix $M_{\Pc_i}$ using the $(N = w, K)$ linear code $\Cc$ as
$C^{(i)} = G M_{\Pc_i}  \in \R^{N \times k},$
where $G$ is an $N \times K$ generator matrix of $\Cc$. Note that the $k$ columns of the matrix $C^{(i)}$ form $k$ codewords of $\Cc$.
\item In the distributed computation setup, for $i \in [{k}/{K}]$ and $j \in [N] = [w]$, we now allocate $j$-th row of $C^{(i)}$ to the $j$-th worker. This way, the $j$-th server is assigned the following sets of $\alpha = {\frac{N}{w}}\cdot\frac{k}{K} = \frac{k}{K}$ vectors. 
\begin{align}
\label{eq:taskj}
\Tc_{j} = \big\{\cv^{(1)}_{j},\ldots, \cv^{(\frac{k}{K})}_{j}\big\} \subset \R^{k},
\end{align}
where $\cv^{(i)}_{j}$ denotes the $j$-th row of the matrix $C^{(i)}$.
\item During the $t$-th step of the gradient descent optimization procedure, $j$-th worker is tasked with computing the inner product of the rows assigned to it with the current estimate $\theta_{t-1}$, i.e., the $j$-th worker sends $\alpha = \frac{k}{K}$ inner products
$\big\{\langle\cv^{(1)}_{j},\bm{\theta}_{t-1}\rangle,\ldots, \langle\cv^{(\frac{k}{K})}_{j},\bm{\theta}_{t-1}\rangle\big\} $
 to the master.
\item \textbf{Straggler tolerant exact gradient computation:~}Assuming that the workers indexed by the set $\Sc_{t}^C := [w]\backslash \Sc_{t}$ behave as stragglers during the $t$-th step of the optimization procedure, the master has access to the following information received from the non-straggling workers. 
\begin{align}
\label{eq:info_master}
C^{(i)}_{\Sc_{t}}\bm{\theta}_{t-1} = G_{\Sc_{t}}M_{\Pc_i}\bm{\theta}_{t-1}~~\text{for all}~i \in [{k}/{K}].
\end{align}
Since the code $\Cc$ generated by $G$ is a linear code, it's straightforward to verify that for each $i \in [{k}/{K}]$, $C^{(i)}\bm{\theta}_{t-1} = GM_{\Pc_i}\bm{\theta}_{t-1}$ corresponds to a codeword of $\Cc$. Moreover, the information available at the master (cf.~\eqref{eq:info_master}) is equivalent to observing these codewords with some of their coordinates erased. Assuming the code $\Cc$ has large enough minimum distance, or equivalently, the matrix $G_{\Sc_{t}}$ is full rank, the master can recover $M_{\Pc_1}\bm{\theta}_{t-1},\ldots, M_{\Pc_{k/K}}\bm{\theta}_{t-1}$ from the information received from the workers indexed by the set $\Sc_{t}$. This allows the master to construct $M\bm{\theta}_{t-1} = X^TX\bm{\theta}_{t-1}$ and update the estimate for $\bm{\theta}$ according to \eqref{eq:gradient_des}.
\end{itemize}
\end{scheme} 

We now state the following result about the performance of Scheme~\ref{sch:exact}, which follows from the description of the scheme in a straightforward manner.

\begin{proposition}
\label{prop:exact}
Assume that the moment encoding based Scheme~\ref{sch:exact} employs an $(N = w, K)$ linear code $\Cc$ with minimum distance $d_{\min}$. Then, the scheme implements exact gradient descent method as long as the number of the stragglers during each step of the optimization  is strictly less than $d_{\min}$.  
\end{proposition}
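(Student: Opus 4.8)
The plan is to prove, by induction on the step index $t$, that the master server recovers the exact vector $M\bm{\theta}_{t-1}$ and therefore produces exactly the iterate $\bm{\theta}_t$ prescribed by the (centralized) PGD recursion \eqref{eq:gradient_des}. Since $\bv = X^T\yv$ is computed once at the outset and is independent of the optimization variable, the only quantity that must be recovered at step $t$ is the matrix--vector product $M\bm{\theta}_{t-1}$; once this is in hand, the master forms $\nabla_{\bm{\theta}}\Lc(\bm{\theta}_{t-1}) = M\bm{\theta}_{t-1}-\bv$ exactly and applies $P_{\Theta}$, reproducing \eqref{eq:gradient_des} verbatim. The base case and the inductive step have identical content, so it suffices to show: for any fixed $\bm{\theta}_{t-1}\in\reals^k$ and any straggler set $\Sc_t^C = [w]\setminus\Sc_t$ with $|\Sc_t^C| < d_{\min}$, the master can reconstruct $M\bm{\theta}_{t-1}$ from the responses $\{\langle \cv^{(i)}_j,\bm{\theta}_{t-1}\rangle : i\in[k/K],\ j\in\Sc_t\}$.

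Fix $i \in [k/K]$ and set $\uv^{(i)} := M_{\Pc_i}\bm{\theta}_{t-1} \in \R^K$. Because encoding is $\R$-linear, the full vector of $N=w$ responses associated with block $i$ is $C^{(i)}\bm{\theta}_{t-1} = G M_{\Pc_i}\bm{\theta}_{t-1} = G\uv^{(i)}$, a codeword of $\Cc$; the data actually received, namely $G_{\Sc_t}\uv^{(i)}$ (cf.~\eqref{eq:info_master}), is precisely this codeword with the $|\Sc_t^C|$ coordinates indexed by $\Sc_t^C$ erased. I would then invoke the standard fact that a linear code of minimum distance $d_{\min}$ corrects any pattern of at most $d_{\min}-1$ erasures; equivalently, for every index set $\Sc$ with $|\Sc| \ge N-(d_{\min}-1)$ the submatrix $G_{\Sc}$ has full column rank $K$. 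This is exactly where the hypothesis $|\Sc_t^C| < d_{\min}$ enters (together with the fact that $G$ itself has rank $K$, which holds as $G$ generates a $K$-dimensional code). Consequently $G_{\Sc_t}$ admits a left inverse, and $\uv^{(i)} = (G_{\Sc_t}^{T}G_{\Sc_t})^{-1}G_{\Sc_t}^{T}\big(G_{\Sc_t}\uv^{(i)}\big)$ is uniquely and exactly determined by the received data.

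Carrying this out for every $i\in[k/K]$ recovers $M_{\Pc_1}\bm{\theta}_{t-1},\ldots,M_{\Pc_{k/K}}\bm{\theta}_{t-1}$; since $\Pc_1,\ldots,\Pc_{k/K}$ partition the row index set $[k]$, reassembling these blocks in the order given by the partition yields the whole vector $M\bm{\theta}_{t-1} = X^TX\bm{\theta}_{t-1}$. The master then computes $\bm{\theta}_t = P_{\Theta}\big(\bm{\theta}_{t-1}-\eta_t(M\bm{\theta}_{t-1}-\bv)\big)$, which is one exact PGD step; the induction closes, and the scheme realizes exact projected gradient descent whenever the number of stragglers in each step is strictly less than $d_{\min}$.

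The main --- and essentially only --- obstacle is not analytic: the statement is a direct bookkeeping consequence of the scheme combined with the erasure-correction property of linear codes. The single point deserving care is to make the chain ``minimum distance $d_{\min}$ $\Leftrightarrow$ every $N-(d_{\min}-1)$ coordinates determine the codeword $\Leftrightarrow$ the corresponding $G_{\Sc}$ has full column rank'' fully rigorous over $\R$ (rather than over a finite field), and to confirm the inequality is stated correctly, i.e.\ $s \le d_{\min}-1$ suffices whereas $s = d_{\min}$ need not.
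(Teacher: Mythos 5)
Your argument is correct and is essentially the paper's own reasoning: the paper offers no separate proof (it states the proposition ``follows from the description of the scheme''), and that description rests on exactly the chain you make explicit --- the received data per block is a codeword of $\Cc$ with the straggler coordinates erased, and $|\Sc_t^C|<d_{\min}$ forces $G_{\Sc_t}$ to have full column rank, so each $M_{\Pc_i}\bm{\theta}_{t-1}$ and hence the exact gradient is recovered. Your additional care about the erasure-correction equivalence over $\R$ and the induction over steps is a welcome but routine elaboration of the same route.
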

  
\begin{remark}
Note that length of the code $\Cc$ does not need to be equal to the number of workers. For the ease of exposition, we focus on the $N = w$ case here. This choice provides a simple natural allocation of computation tasks to the workers. However, suitable allocation can also be devised for the setting with $N \neq w$.
\end{remark}

\noindent{\bf Comparison with gradient coding approach~\cite{TLDK17}.}
Encoding the second moments offers an immediate advantage over the general gradient coding approach for the underlying optimization problem (cf.~\eqref{eq:gLasso}). In Scheme~1, during a step of the optimization method, each worker communicates one scalar for each of the rows assigned to it. 
Whereas, in gradient coding, each worker needs to transmit a $k$-dimensional vector to the master (cf.~\eqref{eq:tx_gradient}, in the supplementary material). Moreover, as for the local computation at a worker during each step, our approach requires computing a single inner product for every row assigned to the worker. 
In contrast, in the gradient coding framework, workers have to perform matrix-vector products between $k \times k$ rank $1$ matrices and $k$-dimensional vectors. 

In Scheme~\ref{sch:exact}, we employ linear codes with the objective that the master should be able to compute (decode) the exact gradient 
during every step of the optimization procedure. This can be achieved by utilizing any linear code with large enough minimum distance. However, for the PGD method to succeed, it's not necessary to compute the exact gradient in every step. In particular, the stochastic gradient descent method is one of the most used versions of the gradient descent methods, where one employs an estimate of the gradient based on a randomly chosen sample and its label~\cite{UnderstandingML}. For the problem at hand, the $t$-th step of projected stochastic gradient descent (PSGD) method is as follows. 
\begin{align}
\label{eq:sgd_iteration}
\bm{\theta}_{t} = P_{\Theta}\big(\bm{\theta}_{t-1} - {\eta_{t}}\cdot m\cdot (\xv_i \xv_i^T\bm{\theta}_{t-1} - y_i\xv_i )\big),
\end{align}
where $i$ denotes an integer that is picked uniformly at random from $[m]$. Note that $m\cdot\big(\xv_i \xv_i^T\bm{\theta} - y_i\xv_i\big)$ indeed gives an unbiased estimate of the true gradient (cf.~\eqref{eq:gradient}) as 
$m\cdot \mathbb{E}\big[\big(\xv_i \xv_i^T\bm{\theta} - y_i\xv_i\big)\big] 
= \nabla_{\bm{\theta}}\Lc(\bm{\theta}).$
Next, we  exploit this robustness of the gradient based procedures to the quality of the gradient.

\subsection{Approximate recovery of gradient in every step}
\label{sec:approximate}

Here, we focus on implementing the gradient based optimization procedure in a distributed computing setup by constructing only an estimate of the true gradient during each step of the optimization procedure. This allows us to employ coding schemes that have low complexity encoding and decoding algorithms, which lowers the overall computational complexity of the coding based approach to mitigate the effect of stragglers. Before we describe our approximate gradient based optimization procedure, we specify the assumptions on the identity of the stragglers during each step of the optimization procedure. 
\begin{assumption}[{Straggling behavior of the workers}]
\label{assum:straggler}
Let the indices of the stragglers $\Sc^{C}_{t} \subset [w]$ during the $t$-th step of the optimization be distributed independent of the stragglers in the previous steps. Furthermore, let the distribution of the stragglers in each step be such that each worker independently behaves as a straggler with probability $q_0$. 
\end{assumption}
The analysis of this section can be modified for the other random models for the identity of the stragglers. Here, we note that we do not ensure any such random model for the straggling behavior during our experimental evaluations of the proposed scheme in Section~\ref{sec:simulations}.

We are now in the position to describe the LDPC codes based optimization procedure that rely on approximate gradient during each step of the optimization procedure.

\begin{scheme}[{LDPC codes based optimization with approximate gradients}] 
\label{sch:ldlc}
Given the matrix $M = X^TX$, we take an $(N = w = k + p, K = k)$ LDPC code $\Cc$ with $H \in \R^{p \times N}$ as its (low-density) parity check matrix.\footnote{For the ease of exposition, in addition to $N = w$,  we assume that $K = k$. The proposed scheme can be easily generalized to the setting with $k > K$, as done in Scheme~\ref{sch:exact} by partitioning the rows of $M$ in the blocks of $K$ rows.} The approximate gradient based optimization procedure in realized as follows.
\begin{itemize}[noitemsep,topsep=0pt,parsep=0pt,partopsep=0pt]
\item Encode $M = X^TX$ using a systematic matrix of $\Cc$, say $G$, as 
$C = GM,$
where without loss of generality we assume that $M$ constitutes the first $k$ rows of the matrix $C$. Next, distribute the $w = k + p$ rows of $C$ among $w$ workers such that the $j$-th row $\cv_j$ is assigned to the $j$-th worker.
\item During the $t$-th step of the optimization procedure, $j$-th worker computes the inner product of the row assigned to it with the current estimate $\theta_{t-1}$ and sends $\cv^{(1)}_{j}\theta_{t-1} \in \R$ to the master.
\item Assuming that the set $\Sc_{t}^C := [w]\backslash \Sc_{t}$ denotes the indices of stragglers during $t$-th step, the information received at the master takes the  form:
\begin{align}
\label{eq:info_master1}
C_{\Sc_{t}}\bm{\theta}_{t-1} = G_{\Sc_{t}}M\bm{\theta}_{t-1}.
\end{align}
Note that $\cv = GM\bm{\theta}_{t-1}$ is a codeword of $\Cc$ with $M\bm{\theta}_{t-1}$ appearing in its first $k$ coordinates. 
\item \textbf{Computation of approximate gradient:~}Given $\cv_{\Sc_{t}} = C_{\Sc_{t}}\bm{\theta}_{t-1} = G_{\Sc_{t}}M\bm{\theta}_{t-1}$, the master employs $D$ iterations of an iterative erasure correction algorithm for the LDPC code $\Cc$, where $\Sc_{t}^{c}$ denotes the indices of the erased coordinates. Let $\hat{\cv}(t;D) = (\hat{c}(t;D)_1,\ldots, \hat{c}(t;D)_k)$ be the estimate for the codeword $\cv$ after $D$ iterations of the erasure correction algorithm~\cite{MCT08}. If a particular coordinate is not recovered by the end of $D$ iterations, we replace the coordinate with $0$. Let ${\Uc_{t}} \subseteq [k]$ denote the set of indices of the coordinates that are set to $0$ in this manner. Subsequently, we construct a vector $\widehat{\bv}_{t}$ by setting those coordinates of $\bv = X^T\yv$ to $0$ that are in $\Uc_{t}$. During the $t$-th step, the master updates the current estimate of $\bm{\theta}^{\ast}$ as
\begin{align}
\label{eq:GradientDescent_new}
\bm{\theta}_{t} = P_{\Theta}\Big( \bm{\theta}_{t-1} - \eta_l \cdot \left(\big({\hat{c}(t;D)_1},\ldots,{\hat{c}(t;D)_k}\big)^T - \widehat{\bv}_t \right)\Big).
\end{align}
\end{itemize}
\end{scheme}

In what follows, we establish that under Assumption~\ref{assum:straggler}, Scheme~\ref{sch:ldlc} indeed implements a variant of the PSGD method. As a result, under some natural requirements on the loss function and the initialization $\bm{\theta}_0$, we obtain a convergence result for Scheme~\ref{sch:ldlc} that is similar to those available in the literature for the PSGD method (cf.~\eqref{eq:sgd_iteration}). 

However, before we analyze the convergence of Scheme~\ref{sch:ldlc}, we need to characterize the quality of the gradient recovered at the end of $D$ iterations of the erasure correction algorithm of the underlying LDPC code $\Cc$. The LDPC codes have been extensively studied in the literature along with the performances of various decoding algorithms for such codes~\cite{gallager,SS96,MCT08}. Under Assumption~\ref{assum:straggler}, where each worker independently behaves as a straggler with probability $q_0$, the vector received by the master (cf.~\eqref{eq:info_master1}) is equivalent to the outcome of an erasure channel. For a specific family of LDPC codes and a fixed iterative erasure correction algorithm, there have been many successful attempts to characterize the likelihood of an initially erased coordinate being recovered after a certain number of iterations. Here, we state a special case\footnote{In particular, we restrict ourselves to the LDPC codes with left and right regular Tanner graphs. We refer the readers to \cite{MCT08} for the general version of the result that applies to LDPC codes with irregular Tanner graphs.} of the most prominent result in this direction which applies to various random ensembles of LDPC codes with sufficiently large length. This results is obtained by density evolution analysis~\cite{MCT08}.

\begin{proposition}
\label{prop:de}
Consider an ensemble of LDPC code defined by the random $p \times N$ parity check matrix $H$ such that each of the $p$ rows ($N$ columns) of the matrix $H$ have $l$ ($r$) nonzero entries.\footnote{There are multiple ways of generating a  random ensembles of LDPC codes (see e.g., \cite{MCT08}[Ch.~3]).} Let each coordinate of a codeword from the ensemble be independently erased with the probability $q_0$. Then, the probability $q_d$ that a coordinate of the codeword remains erased after $d$ iterations of the iterative erasure correction satisfies the  relationship\footnote{The relation in here is shown to hold with very high probability, which involves application of bounded-difference concentration inequality on the random bipartite graphs corresponding to $H$. Given that these are fairly standard results in the coding theory literature, we refer the readers to \cite{RU01, MCT08} for the details.}
$q_{d} = q_{0}\cdot\big(1 - (1 - q_{d-1})^{r -1}\big)^{l - 1}.
$
\end{proposition}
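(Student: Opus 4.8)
The asserted identity is the density-evolution recursion for the iterative (peeling) erasure-correction algorithm run on the stated regular LDPC ensemble, and the plan is to prove it in the two standard steps. First I would analyze the decoder on an idealized computation tree: on a tree the algorithm's behavior factorizes over independent subtrees, and tracking the per-edge erasure probability yields the claimed recursion exactly, the base case being immediate because before the first round a coordinate is erased precisely with the channel probability $q_0$. Second I would transfer this to the actual length-$N$ code by a concentration argument showing that, with very high probability, the true fraction of still-undecoded coordinates after $d$ rounds lies within $o(1)$ of the tree value. As the footnote to the statement already indicates, this second step is entirely standard, and I would simply invoke \cite{RU01,MCT08} for it; the substance to be written out is the tree analysis.

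\textbf{The tree recursion.} Put the erasure-correction algorithm in message-passing form: in each round every coordinate (variable) node sends along each incident edge either ``still erased'' or ``recovered,'' and so does every parity-check node, according to the obvious erasure-decoding rules --- a parity check sends ``recovered'' toward a coordinate exactly when every \emph{other} coordinate it involves has already sent ``recovered,'' and a coordinate sends ``recovered'' toward a check exactly when it was not erased by the channel or at least one of the \emph{other} checks it participates in has already sent it ``recovered.'' Fix an edge of the Tanner graph of $H$ and suppose its depth-$2d$ neighborhood is a tree. Then the messages reaching the root edge from the distinct pendant subtrees are functions of disjoint collections of channel-erasure indicators and are therefore mutually independent, and the same independence holds at every internal node. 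Writing $q_d$ for the probability that a coordinate-to-check message is still ``erased'' after $d$ rounds, one conditions on the channel and multiplies over the independent incoming check messages, each of which is itself ``erased'' unless all the coordinate messages feeding that check were ``recovered'' in the previous round; composing the check-node rule with the coordinate-node rule over the $(l,r)$-regular ensemble gives precisely
\[
q_{d} = q_{0}\cdot\bigl(1 - (1 - q_{d-1})^{r-1}\bigr)^{l-1},
\]
which is the claimed relation.

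\textbf{Finite length and the main obstacle.} To move from the tree to the code, one shows that for each fixed $d$ a uniformly random edge of the bipartite graph associated with $H$ has a tree-like depth-$2d$ neighborhood with probability $1 - O(1/N)$ --- a routine girth estimate for random graphs of bounded degree --- so that the \emph{expected} fraction of undecoded coordinates after $d$ rounds equals $q_d + o(1)$. The remaining and only genuinely delicate point is the concentration of that fraction about its mean: revealing the channel-erasure variables and the edges of $H$ one at a time changes the count of undecoded coordinates after $d$ rounds by an amount bounded in terms of $l$, $r$, and $d$ alone, so the bounded-difference (Azuma--McDiarmid) inequality gives exponential concentration. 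This martingale estimate, rather than the recursion itself, is the hard part of a fully rigorous proof; being standard in the coding-theory literature, I would cite \cite{RU01,MCT08} for it rather than reproduce it, exactly as the footnote does.
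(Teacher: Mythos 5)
Your proposal is correct and is precisely the standard density-evolution argument (tree-neighborhood recursion for the edge-perspective erasure probability, plus girth and bounded-difference concentration estimates) that the paper itself relies on: the paper offers no proof of Proposition~\ref{prop:de} whatsoever, deferring entirely to \cite{RU01, MCT08}, and your sketch reconstructs exactly what those references do. The only point worth flagging is a degree-convention mismatch inherited from the statement itself: with the proposition's definition (rows of $H$ have $l$ nonzero entries, hence check degree $l$; columns have $r$, hence variable degree $r$) your recursion would read $q_{d} = q_{0}\bigl(1 - (1 - q_{d-1})^{l-1}\bigr)^{r-1}$, so in reproducing the displayed formula you have silently adopted the usual Richardson--Urbanke convention ($l$ = variable degree, $r$ = check degree) rather than the one the proposition declares --- a typo in the paper rather than a gap in your argument.
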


\begin{remark}
The key take away from Proposition~\ref{prop:de} is that the probability of a coordinate of a codeword staying erased is a {\em monotonically non-increasing} function of the number of iterations as long as $q_0 < q^{\ast}(r,l) < 1$, where $q^{\ast}(r,l)$ is function of the row and column weights of the random matrix $H$. 
\end{remark}

 The following lemma characterizes the quality of the gradient vector obtained at the master after $D$ iterations of the erasure correction algorithm of the underlying LDPC code.

\begin{lemma}
\label{lem:unbiased}
Let the distribution of stragglers satisfy Assumption~\ref{assum:straggler} and the master node employs $D$ iterations of the erasure correction algorithm. Then, during $t$-th step of the optimization procedure, we have
$$
\mathbb{E}\left[\big({\hat{c}(t;D)_1},\ldots,{\hat{c}(t;D)_k}\big)^T - \widehat{\bv}_t \right] = (1 - q_D) \cdot \nabla\Lc(\bm{\theta_{t-1}}),
$$
which is a scaled version of the true gradient at $\bm{\theta}_{l-1}$.
\end{lemma}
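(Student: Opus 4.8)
The plan is to analyse the recovered gradient coordinate by coordinate, resting on two facts: iterative erasure decoding of a linear code never outputs a wrong symbol — a coordinate is either filled in with its true value or left erased, hence replaced by $0$ — and Proposition~\ref{prop:de} fixes the marginal probability that a given coordinate is still erased after $D$ rounds at $q_D$. Throughout I fix the step index $t$ and work conditionally on $\bm{\theta}_{t-1}$: by Assumption~\ref{assum:straggler} the straggler set in step $t$ is drawn independently of the straggler sets in steps $1,\dots,t-1$, and $\bm{\theta}_{t-1}$ is a deterministic function of the latter together with the fixed data, so $\bm{\theta}_{t-1}$ is independent of $\Sc_t$ and may be treated below as a fixed vector, with all expectations taken over $\Sc_t$ alone. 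Recall also that $\cv = GM\bm{\theta}_{t-1}$ is a codeword of $\Cc$ whose first $k$ coordinates equal $M\bm{\theta}_{t-1}$, since $G$ is systematic with $M$ in its first $k$ rows.

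The master observes $\cv$ with the coordinates indexed by $\Sc_t^C$ erased and runs $D$ rounds of iterative erasure correction; by the no-spurious-error property, for each $i \in [k]$ the output satisfies $\hat{c}(t;D)_i = (M\bm{\theta}_{t-1})_i\cdot\mathbf{1}\{i \notin \Uc_t\}$, and by construction $(\widehat{\bv}_t)_i = \bv_i\cdot\mathbf{1}\{i \notin \Uc_t\}$. Subtracting and using $\nabla\Lc(\bm{\theta}_{t-1}) = M\bm{\theta}_{t-1} - \bv$ from \eqref{eq:gradient},
\[
\hat{c}(t;D)_i - (\widehat{\bv}_t)_i = \big(\nabla\Lc(\bm{\theta}_{t-1})\big)_i\cdot\mathbf{1}\{i \notin \Uc_t\},\qquad i \in [k].
\]
Under Assumption~\ref{assum:straggler} each coordinate of $\cv$ is erased independently with probability $q_0$, so Proposition~\ref{prop:de} gives $\Prob[i \in \Uc_t] = q_D$, hence $\Exp[\mathbf{1}\{i \notin \Uc_t\}] = 1 - q_D$, for every $i \in [k]$; since $\nabla\Lc(\bm{\theta}_{t-1})$ is fixed given $\bm{\theta}_{t-1}$, linearity of expectation yields $\Exp[\hat{c}(t;D)_i - (\widehat{\bv}_t)_i] = (1 - q_D)\,(\nabla\Lc(\bm{\theta}_{t-1}))_i$, and stacking the $k$ coordinates proves the claim.

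The main obstacle is justifying the uniform per-coordinate marginal $\Prob[i \in \Uc_t] = q_D$: density evolution, in the form stated in Proposition~\ref{prop:de}, is really a statement about the expected fraction of still-erased coordinates over the random ensemble and about depth-$D$ neighbourhoods in the Tanner graph being tree-like with high probability, so to apply it to a single fixed coordinate one must observe that in a regular ensemble the coordinates are exchangeable — hence all carry the same marginal erasure probability — and that the depth-$D$ erasure-decoding decision at a variable node depends only on its depth-$D$ neighbourhood. I would invoke Proposition~\ref{prop:de} directly at the level of this marginal, as the lemma statement implicitly does, and note that the attendant concentration arguments are standard and already pointed to in the footnotes accompanying Proposition~\ref{prop:de}; the only other point worth recording explicitly is the no-spurious-error property of erasure decoding, since it is precisely what makes the recovered coordinates equal to $(M\bm{\theta}_{t-1})_i$ exactly rather than approximately.
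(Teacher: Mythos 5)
Your proof is correct and follows essentially the same route as the paper's: each of the first $k$ coordinates survives $D$ decoding iterations with probability $1-q_D$ and is otherwise zeroed (in both $\hat{\cv}(t;D)$ and $\widehat{\bv}_t$), so linearity of expectation scales the true gradient by $1-q_D$. You are somewhat more explicit than the paper about the coupling of the two zeroing events through $\Uc_t$, the conditioning on $\bm{\theta}_{t-1}$, and the interpretation of $q_D$ as a per-coordinate marginal, but these are refinements of the same argument rather than a different one.
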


\begin{proof}
Recall that, during the $t$-th step of the optimization procedure, $q_D$ denotes the probability that a particular coordinate of the codeword $\cv = C\theta_{t-1} \in \R^{N}$ is not recovered by the master (cf.~Scheme~\ref{sch:ldlc}). The first $k$ coordinates of this vector correspond to the true gradient vector at $\theta_{t-1}$. Therefore, for $i \in [k]$, we have 
\begin{align}
\label{eq:ceq}
&\mathbb{P}\left[\hat{c}(t;D)_i = c_i \right] = 1 - q_{D}~\text{and}~\mathbb{P}\left[\hat{c}(t;D)_i = 0\right] = q_{D}.
\end{align}
Similarly, for $i \in [k]$, we have,
\begin{align}
\label{eq:beq}
\mathbb{P}\left[\hat{b}_i = b_i \right] = 1 - q_{D}~~\text{and}~~\mathbb{P}\left[\hat{b}_i = 0\right] = q_{D}.
\end{align}
By using \eqref{eq:ceq} and \eqref{eq:beq}, it is straightforward to verify that
\begin{align}
\mathbb{E}\left[\big({\hat{c}(t;D)_1},\ldots,{\hat{c}(t;D)_k}\big)^T - \widehat{\bv}_t \right] &= (1 - q_D)\cdot\left( (c_1, \ldots, c_k)^T - \bv  \right) \nonumber \\ 
&\overset{(i)}{=} (1 - q_D)\cdot\left(M\bm{\theta}_{l-1} - X^T\yv\right) \nonumber \\
&= (1 - q_D)\cdot \nabla \Lc(\bm{\theta_{t-1}}), \nonumber
\end{align}
where $(i)$ follows from the systematic form associated with the generator matrix $G$.
\end{proof}

\paragraph{Convergence analysis of Scheme~\ref{sch:ldlc}}
Here, we formally argue that the proposed Scheme~\ref{sch:ldlc} enjoys the convergence guarantees similar to those available for the typical PSGD method. In fact, the proof of the convergence of our scheme heavily relies on the ideas employed in the proof of convergence for PSGD algorithm as described in \cite{Nemi09}. 
Recall that the total empirical loss associated with the model parameter $\bm{\theta} \in \R^{n}$ for given set of data samples $\{\xv_i\}_{i \in [m]} \subset \R^{n}$ and the corresponding labels  $\{y_i\}_{i \in [m]} \subset \R$ takes the form.
\begin{align}
\Lc(\bm{\theta}) = \sum_{i = 1}^{m} \ell \big((y_i,\xv_i), \bm{\theta}\big) = \frac{1}{2}\cdot \sum_{i = 1}^{m}\big(y_i - \xv_i^T\bm{\theta}\big)^2.
\end{align}
We now state the convergence result for Scheme~\ref{sch:ldlc} which holds under natural assumptions on the loss function and the initialization for the optimization procedure $\bm{\theta}_{0}$.In what follows we use $\|\cdot\|$ to denote the $\ell_2$ norm $\|\cdot\|_2$. We also note that the projection operator $P_\Theta$ is non-expanding, i.e., 
\begin{align*}\|P_{\Theta}\big(\bm{\theta}\big) - P_{\Theta}\big(\bm{\theta}'\big)\| \leq \|\bm{\theta} - \bm{\theta}'\|~~\text{for all}~\bm{\theta}, \bm{\theta}' \in \R^{k}.\end{align*}
\begin{theorem}\label{thm:main}
Suppose for all $(\xv, y) \in \R^{k + 1}$ and $\bm{\theta} \in \Theta$, the loss function satisfies
$\|\nabla \Lc(\bm{\theta})\| \leq B.$
Moreover, let the initial estimate $\bm{\theta}_{0}$ satisfy 
$\|\bm{\theta}_{0} - \bm{\theta}^{\ast}\| \leq R.$
Then, by setting the learning rate as $\eta = {R}/({B}\cdot\sqrt{{T}})$ in Scheme~\ref{sch:ldlc}, when $D$ iterations of LDPC decoding are employed during each gradient descent step, ensures the following:
\begin{align}
\avg\big[\Lc(\bar{\bm{\theta}}_T)\big] - \Lc(\bm{\theta}^{\ast}) \leq {RB}/{\big((1 - q_{D})\cdot\sqrt{T}\big)}, 
\end{align}
where $\bar{\bm{\theta}}_{T} = \frac{1}{T}\cdot\sum_{t \in [T]}\bm{\theta}_{t}$ and the expectation is taken over the distribution of the stragglers.
\end{theorem}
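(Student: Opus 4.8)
The plan is to run the textbook analysis of projected stochastic (sub)gradient descent, as in \cite{Nemi09}, while carefully tracking the scaling factor $1-q_D$ supplied by Lemma~\ref{lem:unbiased}. Write $\gv_t := \big(\hat{c}(t;D)_1,\ldots,\hat{c}(t;D)_k\big)^T - \widehat{\bv}_t$ for the approximate gradient used in \eqref{eq:GradientDescent_new}, so that $\bm{\theta}_t = P_{\Theta}(\bm{\theta}_{t-1} - \eta\gv_t)$. The first step is the standard one-step bound: since $\bm{\theta}^{\ast}\in\Theta$ and $P_{\Theta}$ is non-expanding,
\begin{align*}
\|\bm{\theta}_t - \bm{\theta}^{\ast}\|^2 \le \|\bm{\theta}_{t-1} - \eta\gv_t - \bm{\theta}^{\ast}\|^2 = \|\bm{\theta}_{t-1} - \bm{\theta}^{\ast}\|^2 - 2\eta\langle \gv_t, \bm{\theta}_{t-1} - \bm{\theta}^{\ast}\rangle + \eta^2\|\gv_t\|^2 .
\end{align*}

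Second, I would establish the two facts feeding into this inequality. \textbf{(a)} A deterministic bound $\|\gv_t\|\le B$: by construction each coordinate of $\gv_t$ equals the corresponding coordinate of the true gradient $\nabla\Lc(\bm{\theta}_{t-1}) = M\bm{\theta}_{t-1} - \bv$ for indices not in $\Uc_t$ and equals $0$ for indices in $\Uc_t$, so zeroing coordinates only shrinks the norm and $\|\gv_t\|\le\|\nabla\Lc(\bm{\theta}_{t-1})\|\le B$. \textbf{(b)} Conditioning on the history through step $t-1$ (equivalently on $\bm{\theta}_{t-1}$), Assumption~\ref{assum:straggler} makes the step-$t$ erasure pattern independent of $\bm{\theta}_{t-1}$, so Lemma~\ref{lem:unbiased} gives $\mathbb{E}[\gv_t \mid \bm{\theta}_{t-1}] = (1-q_D)\nabla\Lc(\bm{\theta}_{t-1})$; combined with convexity of $\Lc$ (the square loss is convex, its Hessian $X^TX$ being positive semidefinite), this yields $\mathbb{E}[\langle \gv_t, \bm{\theta}_{t-1}-\bm{\theta}^{\ast}\rangle \mid \bm{\theta}_{t-1}] = (1-q_D)\langle \nabla\Lc(\bm{\theta}_{t-1}), \bm{\theta}_{t-1}-\bm{\theta}^{\ast}\rangle \ge (1-q_D)\big(\Lc(\bm{\theta}_{t-1}) - \Lc(\bm{\theta}^{\ast})\big)$.

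Third, I take full expectations, rearrange into
\[
2\eta(1-q_D)\,\mathbb{E}\big[\Lc(\bm{\theta}_{t-1}) - \Lc(\bm{\theta}^{\ast})\big] \le \mathbb{E}\|\bm{\theta}_{t-1}-\bm{\theta}^{\ast}\|^2 - \mathbb{E}\|\bm{\theta}_t - \bm{\theta}^{\ast}\|^2 + \eta^2 B^2 ,
\]
sum over $t=1,\ldots,T$ so the first two terms telescope, and use $\|\bm{\theta}_0-\bm{\theta}^{\ast}\|\le R$ together with $\mathbb{E}\|\bm{\theta}_T-\bm{\theta}^{\ast}\|^2\ge 0$ to obtain $2\eta(1-q_D)\sum_{t}\mathbb{E}[\Lc(\bm{\theta}_{t-1})-\Lc(\bm{\theta}^{\ast})]\le R^2 + T\eta^2 B^2$. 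Dividing by $2\eta(1-q_D)T$, applying Jensen's inequality $\Lc(\bar{\bm{\theta}}_T)\le \tfrac1T\sum_t\Lc(\bm{\theta}_t)$, and substituting $\eta = R/(B\sqrt T)$ (which makes $T\eta^2B^2 = R^2$) collapses the right-hand side to $RB/\big((1-q_D)\sqrt T\big)$, as claimed.

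I do not expect a serious obstacle, since this is in essence the \cite{Nemi09} argument; the genuine content beyond the textbook proof is threading the $(1-q_D)$ factor correctly: it is introduced on the left-hand side via Lemma~\ref{lem:unbiased}, passes through the telescoping untouched, and ends up dividing the final bound, while the variance term $\eta^2\mathbb{E}\|\gv_t\|^2$ must be controlled by $B^2$ (not by $B^2/(1-q_D)^2$), which is exactly why step (a) uses the deterministic bound $\|\gv_t\|\le B$ rather than a bound in expectation. A minor bookkeeping point is the off-by-one between the averaged iterates $\bm{\theta}_1,\ldots,\bm{\theta}_T$ appearing in $\bar{\bm{\theta}}_T$ and the iterates $\bm{\theta}_0,\ldots,\bm{\theta}_{T-1}$ produced by the telescoping sum; this is handled exactly as in the standard PSGD proof and does not affect the stated rate.
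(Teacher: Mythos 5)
Your proposal is correct and follows essentially the same route as the paper's proof: the non-expansiveness of $P_{\Theta}$ gives the one-step recursion, Lemma~\ref{lem:unbiased} supplies the $(1-q_D)$-scaled unbiasedness under Assumption~\ref{assum:straggler}, and telescoping plus convexity/Jensen and the choice $\eta = R/(B\sqrt{T})$ yield the stated bound. Your explicit justification that $\|\gv_t\|\le B$ holds deterministically (zeroing coordinates only shrinks the norm) and your note on the off-by-one in the averaged iterates are details the paper glosses over, but they do not change the argument.
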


\begin{proof} It follows from the convexity of the loss function $\Lc(\cdot)$ that 
\begin{align}
\label{eq:convexity}
\Lc(\bar{\bm{\theta}}_{T}) - \Lc(\bm{\theta}^{\ast})  \leq \frac{1}{T}\sum_{t = 1}^{T}\Lc(\bm{\theta}_{t}) - \Lc(\bm{\theta}^{\ast}) \leq \frac{1}{T}\sum_{t = 1}^{T}\nabla \Lc(\bm{\theta}_{t})\cdot(\bm{\theta}_t - \bm{\theta}^{\ast}).
\end{align}
Recall from \eqref{eq:GradientDescent_new} that, for $0 \leq t \leq T - 1$, we have 
\begin{align}
\bm{\theta}_{t+1} = P_{\Theta}\big(\bm{\theta}_t - g_t(\bm{\theta}_t)\big), \nonumber 
\end{align}
where $g_t(\bm{\theta}_t) = \big({\hat{c}(t+1;D)_1},\ldots,{\hat{c}({t+1;D})_k}\big)^T - \widehat{\bv}_{t+1} $. Now, consider 
\begin{align}
\label{eq:smoothness}
\|\bm{\theta}_{t+1} - \bm{\theta}^{\ast}\|^2 & \leq \|P_{\Theta}\big(\bm{\theta}_{t} - \gv_{t}(\bm{\theta}_{t})\big) - \bm{\theta}^{\ast}\|^2  \overset{(i)}{=} \|P_{\Theta}\big(\bm{\theta}_{t} - \gv_{t}(\bm{\theta}_{t})\big) - P_{\Theta}\big(\bm{\theta}^{\ast}\big)\|^2 \nonumber \\
&\leq \|\bm{\theta}_{t} - \gv_{t}(\bm{\theta}_{t}) - \bm{\theta}^{\ast}\|^2 \nonumber \\
&= \|\bm{\theta}_{t} - \bm{\theta}^{\ast}\|^2 - 2\eta \cdot \langle \gv_{t}(\bm{\theta}_{t}), (\bm{\theta}_{t} - \bm{\theta}^{\ast})\rangle + \eta^{2} \|\gv_{t}(\bm{\theta}_{t})\|^2 \nonumber \\
&\leq \|\bm{\theta}_{t} - \bm{\theta}^{\ast}\|^2 - 2\eta \cdot \langle \gv_{t}(\bm{\theta}_{t}), (\bm{\theta}_{t} - \bm{\theta}^{\ast})\rangle + \eta^{2} B^{2},
\end{align}
where $(i)$ follows from the fact that $\bm{\theta}^{\ast} \in \Theta$ and $(ii)$ holds as the operator $P_{\Theta}$ is non-expanding, i.e.,
$$\|P_{\Theta}\big(\bm{\theta}\big) - P_{\Theta}\big(\bm{\theta}'\big)\| \leq \|\bm{\theta} - \bm{\theta}'\|~~\text{for all}~\bm{\theta}, \bm{\theta}' \in \R^{k}.$$
Let $\Hc_{t}$ denote the history, i.e., identity of the stragglers, before the $(t+1)$-th step of the optimization procedure. Note that it follows from Lemma~\ref{lem:unbiased} that 
\begin{align}
\label{eq:unbiased1}
\avg[\gv_{t}(\bm{\theta}_{t})~\vert~\Hc_t] = (1 - q_D)\cdot \nabla \Lc(\bm{\theta}_{t}).
\end{align}
By combining \eqref{eq:smoothness} and \eqref{eq:unbiased1}, we obtain that
\begin{align}
\avg [ \|\bm{\theta}_{t+1} -\bm{\theta}^{\ast}\|^2~\vert~\Hc_{t}] \leq \|\bm{\theta}_{t} - \bm{\theta}^{\ast}\|^2 - 2\eta\cdot (1 - q_D) \cdot \langle \nabla \Lc(\bm{\theta}_{t}), (\bm{\theta}_{t} - \bm{\theta}^{\ast})\rangle +  \eta^2 B^2.
\end{align}
Now taking expectation on the both sides gives us that 
\begin{align}
\avg [ \|\bm{\theta}_{t+1} -\bm{\theta}^{\ast}\|^2] \leq \avg[\|\bm{\theta}_{t} - \bm{\theta}^{\ast}\|^2] - 2\cdot \avg[\eta\cdot (1-q_D)\cdot\langle \nabla \Lc(\bm{\theta}_{t}), (\bm{\theta}_{t} - \bm{\theta}^{\ast})\rangle] +  \eta^2 B^2.
\end{align}
or 
\begin{align}
(1 - q_D)\cdot\avg[\langle \nabla \Lc(\bm{\theta}_{t}), (\bm{\theta}_{t} - \bm{\theta}^{\ast})\rangle] & \leq \frac{1}{2\eta}\cdot\avg[\|\bm{\theta}_{t} - \bm{\theta}^{\ast}\|^2] - \frac{1}{2\eta}\cdot \avg [ \|\bm{\theta}_{t+1} -\bm{\theta}^{\ast}\|^2]  +  \frac{\eta B^2}{2}.
\end{align}
By taking the average of the aforementioned inequality over $T$ iteration, we obtain that 
\begin{align}
\label{eq:final_ineq}
\avg \left[ \frac{1}{T} \sum_{t = 0}^{T-1}   \langle \nabla \Lc(\bm{\theta}_{t}), (\bm{\theta}_{t} - \bm{\theta}^{\ast})\rangle \right]  &\leq \frac{1}{2\eta(1 - q_D)}\cdot\Big(\frac{\avg[\|\bm{\theta}_{0} - \bm{\theta}^{\ast}\|^2]}{T} - \frac{\avg [ \|\bm{\theta}_{T} -\bm{\theta}^{\ast}\|^2]}{T}  +  {\eta^2 B^2}\Big) \nonumber \\
&\leq \frac{\|\bm{\theta}_{0} - \bm{\theta}^{\ast}\|^2}{2\eta T(1- q_D)} + \frac{\eta B^2}{2(1-q_D)} \nonumber \\
&\leq \frac{R^2}{2\eta T(1- q_D)} + \frac{\eta B^2}{2(1-q_D)} \nonumber \\
&\overset{(i)}{\leq}\frac{1}{1 - q_D} \cdot \frac{R B}{\sqrt{T}},
\end{align}
where $(i)$ follows form the choice of $\eta$. Now, Theorem~\ref{thm:main} follows by combining \eqref{eq:convexity} and \eqref{eq:final_ineq}.
\end{proof}

\section{Simulation results}
\label{sec:simulations}

In this section, we conduct a detailed evaluation of our  moment encoding based scheme (cf.~Scheme \ref{sch:ldlc}) for distributed computation. In particular, we perform experiments on distributed  setting to obtain solutions of two problems: 1) Least-square estimation, and 2) Sparse recovery. Recall that, for  least-square estimation, given inputs $\yv \in \reals^m$ and $X\equiv \{x_1,x_2,\ldots x_m \} \in \reals^{m \times k}$ the task is to find 
$
\arg \min_{\bm{\theta} \in \reals^k} \|\yv - X\bm{\theta}\|_2^2.
$
Note that this problem does not require a projection step during the optimization procedure.
In the sparse recovery problem, one seeks to find a $u$-sparse vector $\bm{\theta} \in \reals^k$ (this means at most $u$ coordinates out of $k$ of the vector $\bm{\theta}$ are nonzero) from linear samples $\yv = X \bm{\theta}$, for some matrix $X \in \reals^{m \times k}$. In this case, $t$-th step of the projected gradient descent procedure takes the form~\cite{garg2009gradient}
$
\bm{\theta}_t = H_u(\bm{\theta}_{t-1} - \eta \nabla_{\bm{\theta}}\Lc(\bm{\theta}_{t-1})),
$
where $\nabla_{\bm{\theta}}\Lc(\bm{\theta}) =  X^TX\bm{\theta} - X^T\yv$ is the gradient of the squared loss $ \|\yv - X\bm{\theta}\|_2^2$ and $H_u(\wv)$ is the thresholding operation that sets all except the largest $u$ coordinates in absolute value of $\wv \in \reals^k$ to zero. To compute the gradient we again employ the moment encoding method with LDPC codes as outlined in Scheme \ref{sch:ldlc}. Note that the thresholding operation can be easily performed by the master node itself.

\setlength{\belowcaptionskip}{-10pt}
\begin{figure*}[t!]
        \centering
        \begin{subfigure}[b]{0.23\textwidth}
                \centering
                \includegraphics[width=1\textwidth]{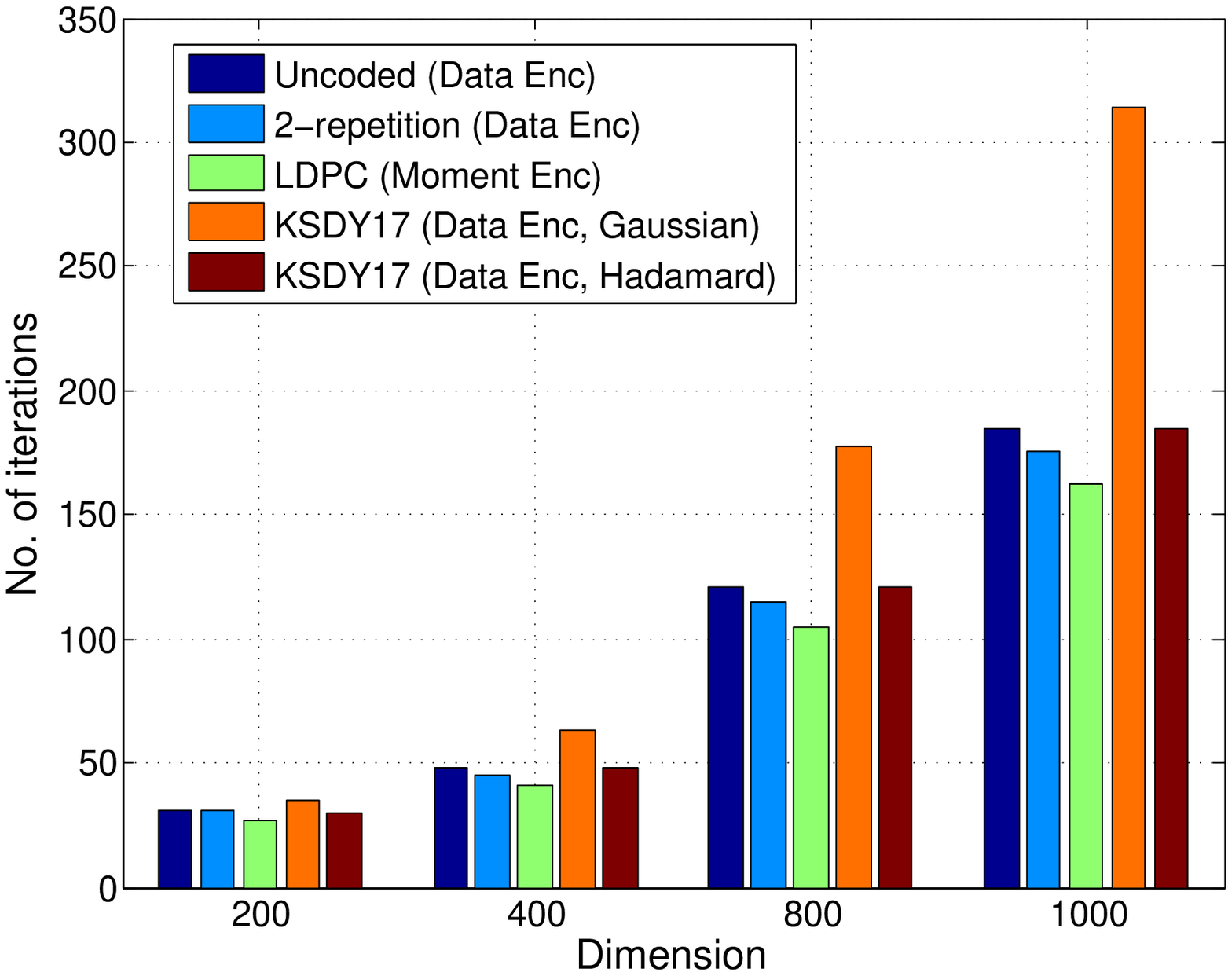}
                \label{fig:Grad35_iter}
        \end{subfigure}%
       ~~
        \begin{subfigure}[b]{0.23\textwidth}
   		\footnotesize
                \centering
                \includegraphics[width=1\textwidth]{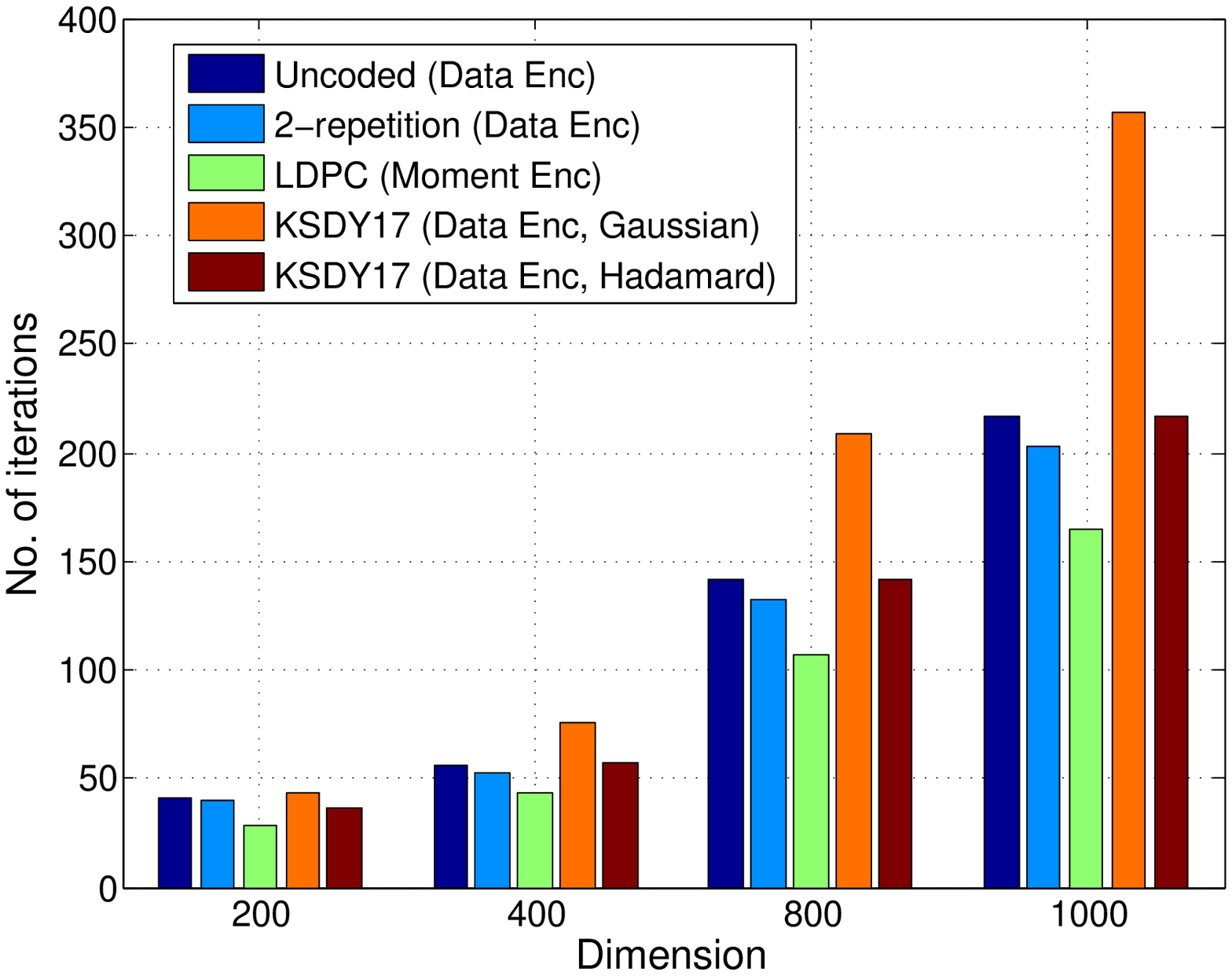}
                \label{fig:Grad30_iter}
        \end{subfigure}~~
        \begin{subfigure}[b]{0.23\textwidth}
                \centering
                \includegraphics[width=1\textwidth]{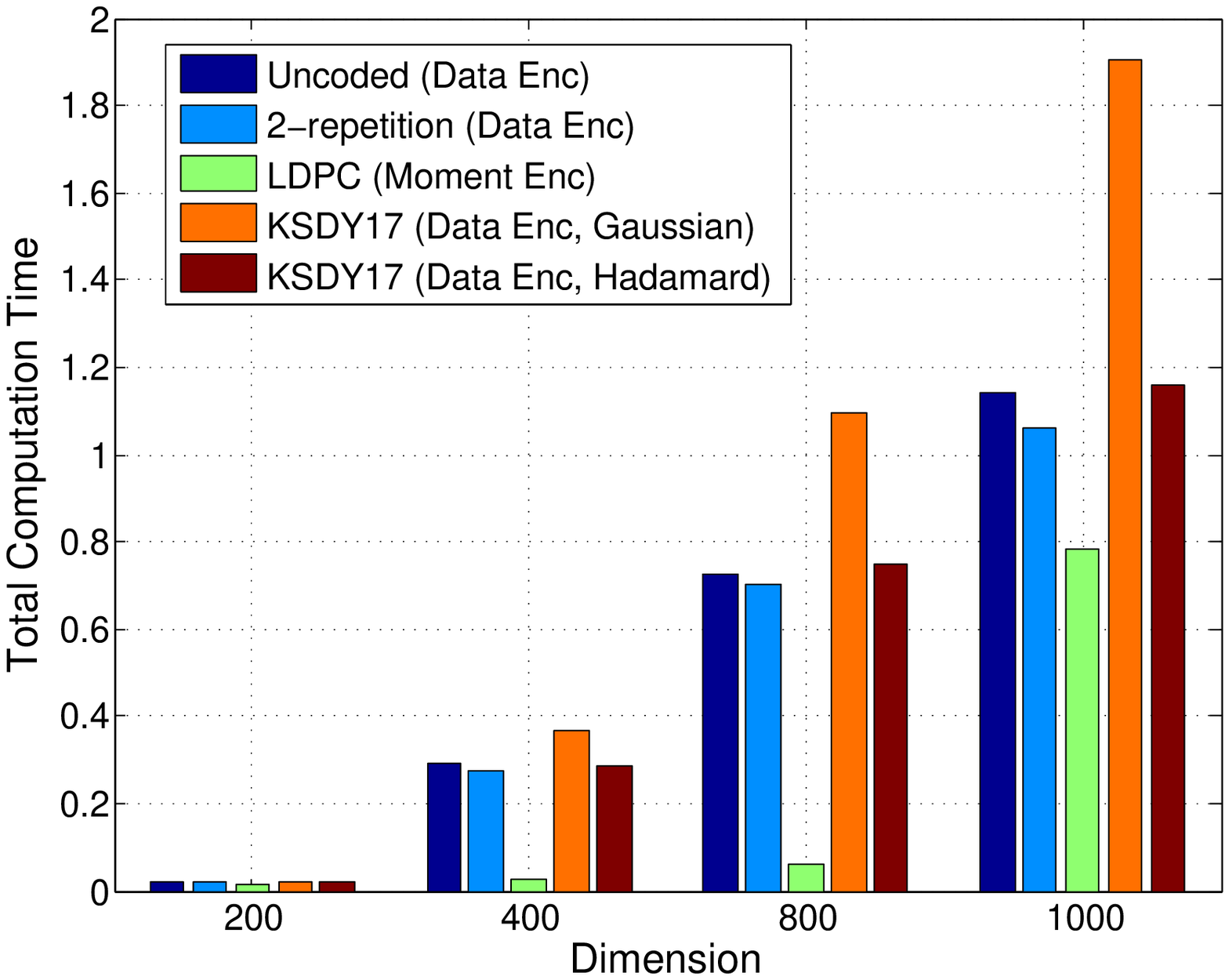}
                \label{fig:Grad35_time}
        \end{subfigure}%
       ~~
        \begin{subfigure}[b]{0.23\textwidth}
   		\footnotesize
                \centering
                \includegraphics[width=1\textwidth]{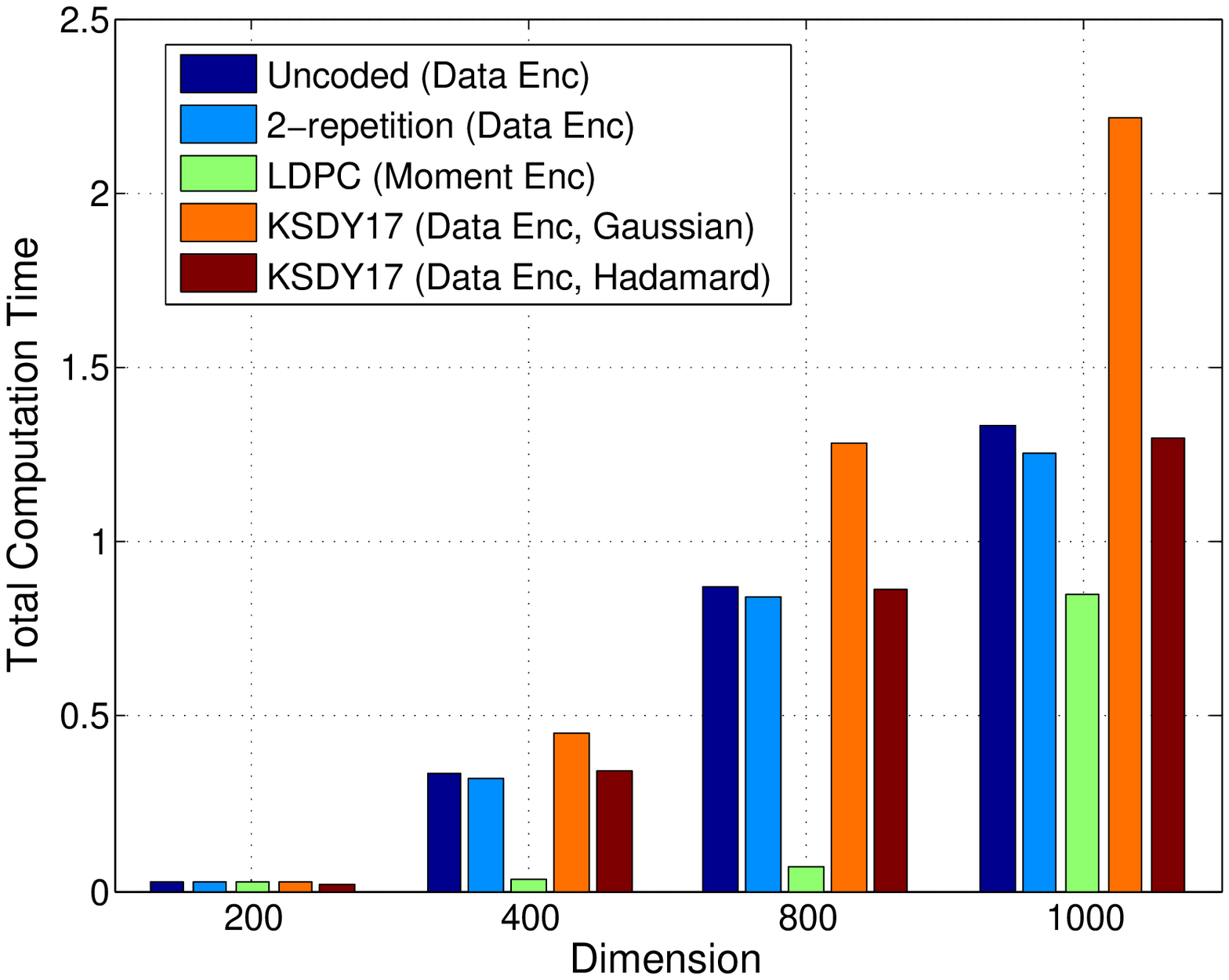}
                \label{fig:Grad30_time}
        \end{subfigure}
        \caption{\small Total number of iterations and total computation time for solving the linear regression problem $(m = 2048)$. The number of stragglers are 5, 10, 5 and 10 from left to right. \label{fig:ls}}

        \vspace*{\floatsep}

        \begin{subfigure}[b]{0.23\textwidth}
                \centering
                \includegraphics[width=1\textwidth]{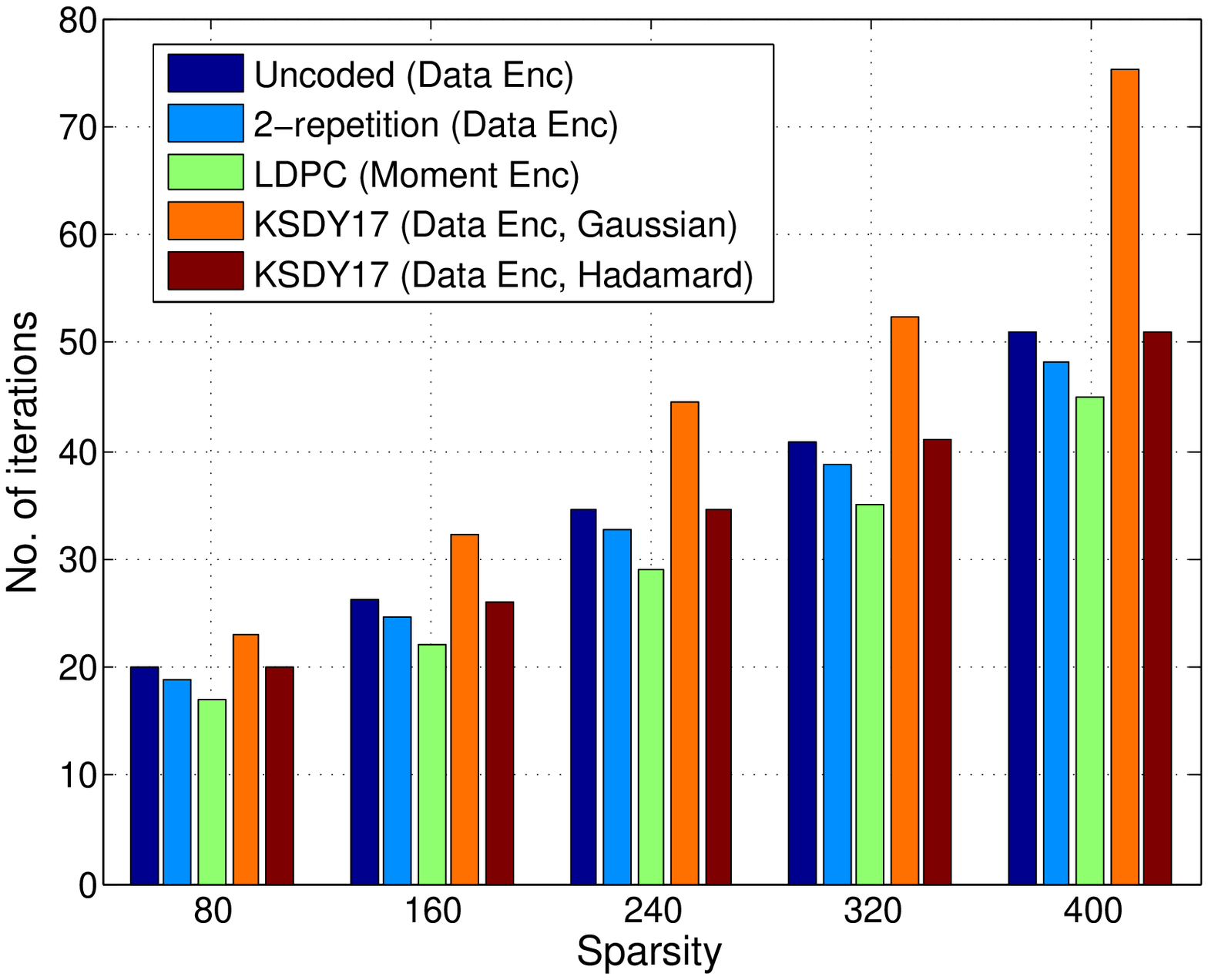}
                \label{fig:Sparse35_1000_iter}
        \end{subfigure}%
       ~~
        \begin{subfigure}[b]{0.23\textwidth}
   		\footnotesize
                \centering
                \includegraphics[width=1\textwidth]{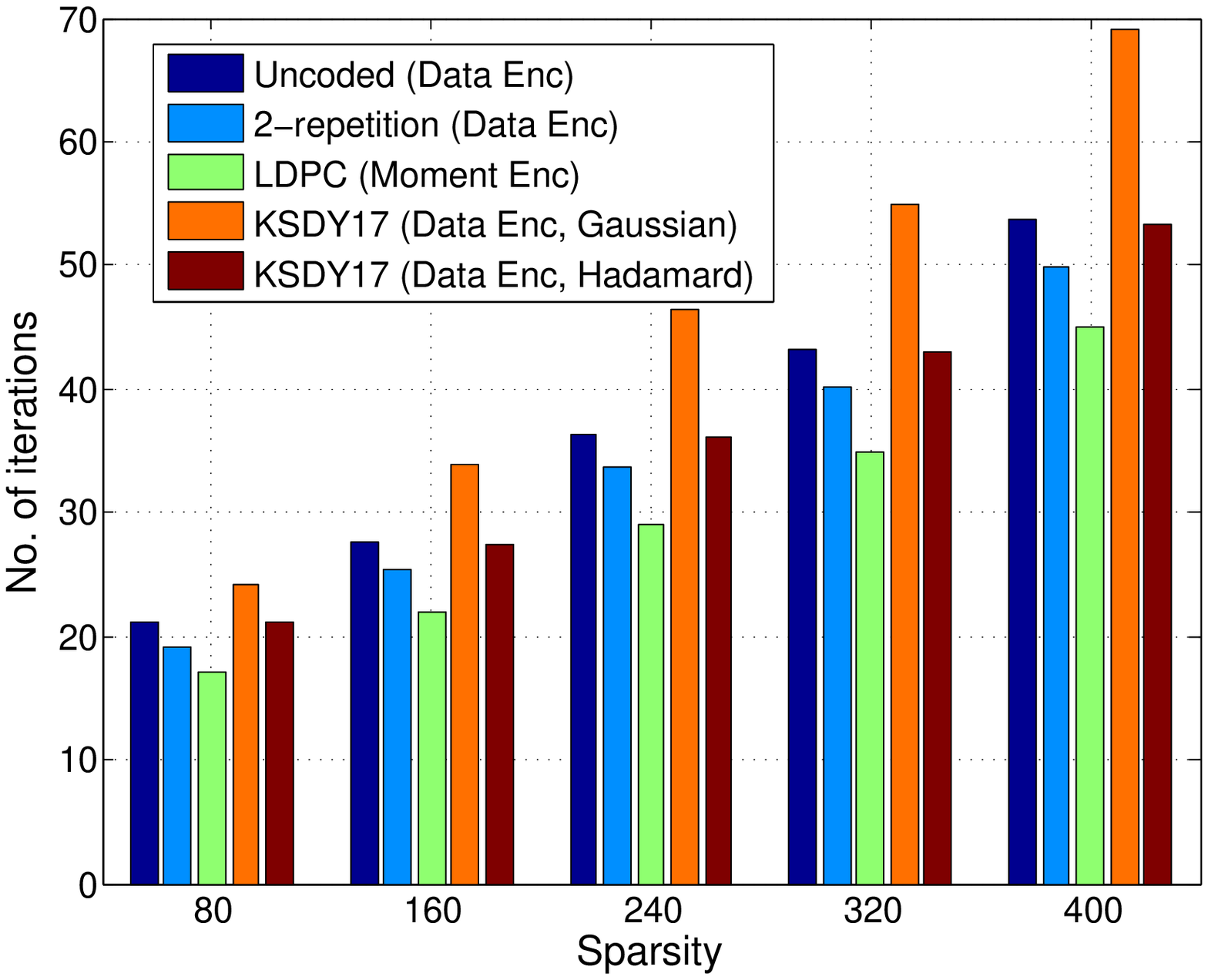}
                \label{fig:Sparse30_800_iter}
        \end{subfigure}~~
        \begin{subfigure}[b]{0.23\textwidth}
                \centering
                \includegraphics[width=1\textwidth]{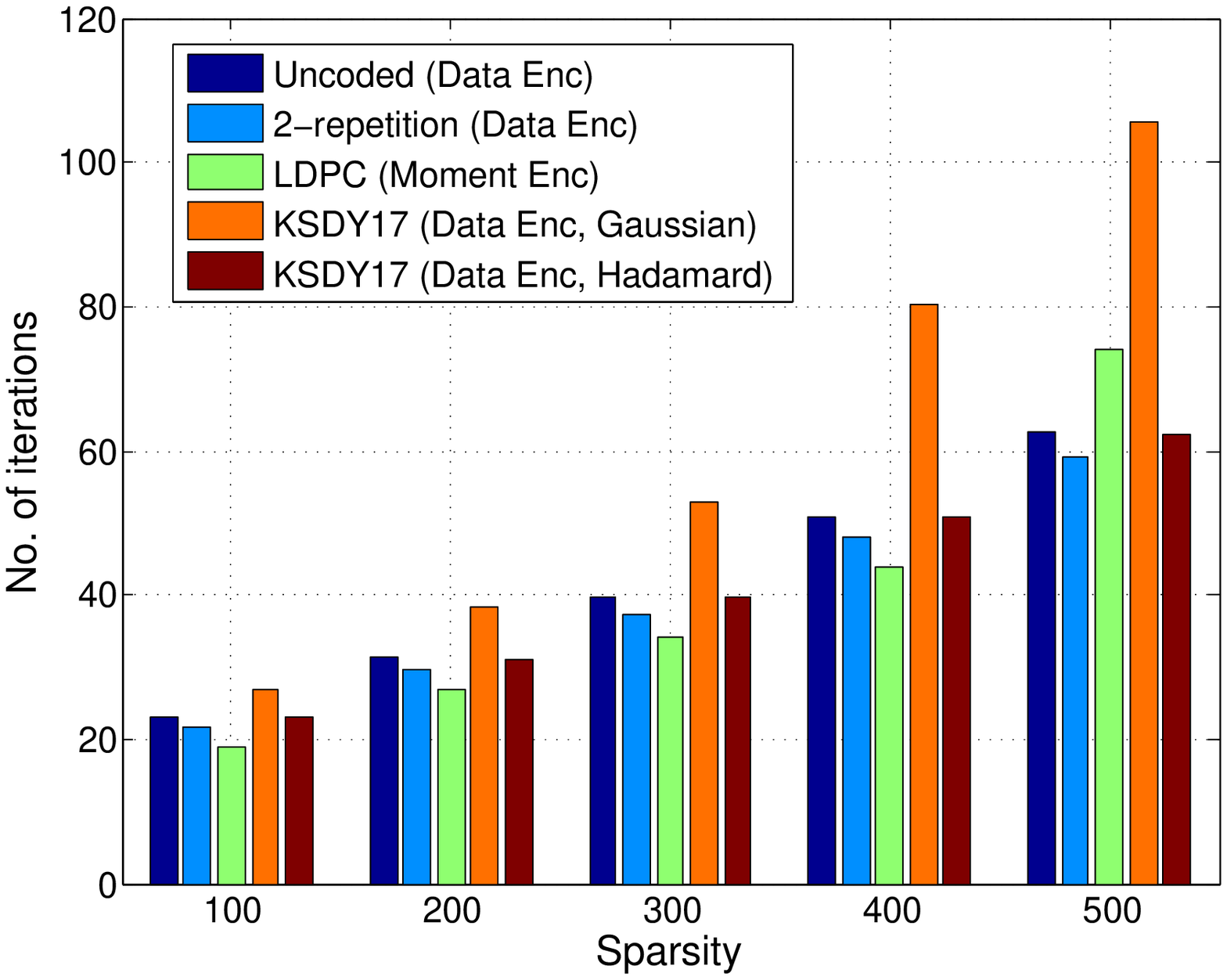}
                \label{fig:Sparse35_1000_iter}
        \end{subfigure}%
       ~~
        \begin{subfigure}[b]{0.23\textwidth}
   		\footnotesize
                \centering
                \includegraphics[width=1\textwidth]{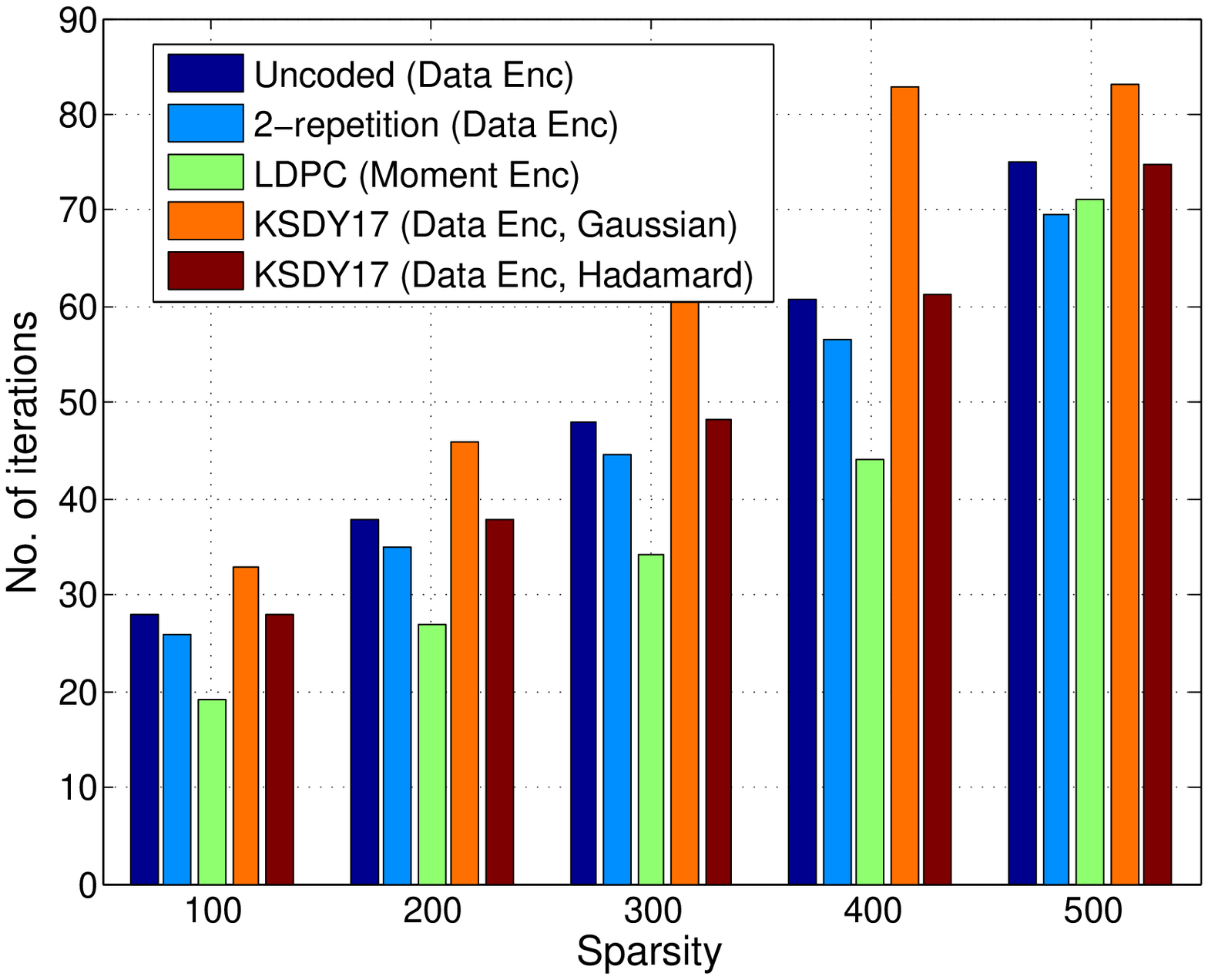}
                \label{fig:Sparse30_1000_iter}
        \end{subfigure}
        \caption{\small Total number of iterations for solving sparse recovery problem in an overdetermined system $(m = 2048)$. 
        The left two figures correspond to the dimension $800$ and the remaining ones correspond to dimension $1000$. The number of stragglers are 5, 10, 5 and 10 from left to right. \label{fig:SRover}}

        \vspace*{\floatsep}

        \begin{subfigure}[b]{0.23\textwidth}
                \centering
                \includegraphics[width=1\textwidth]{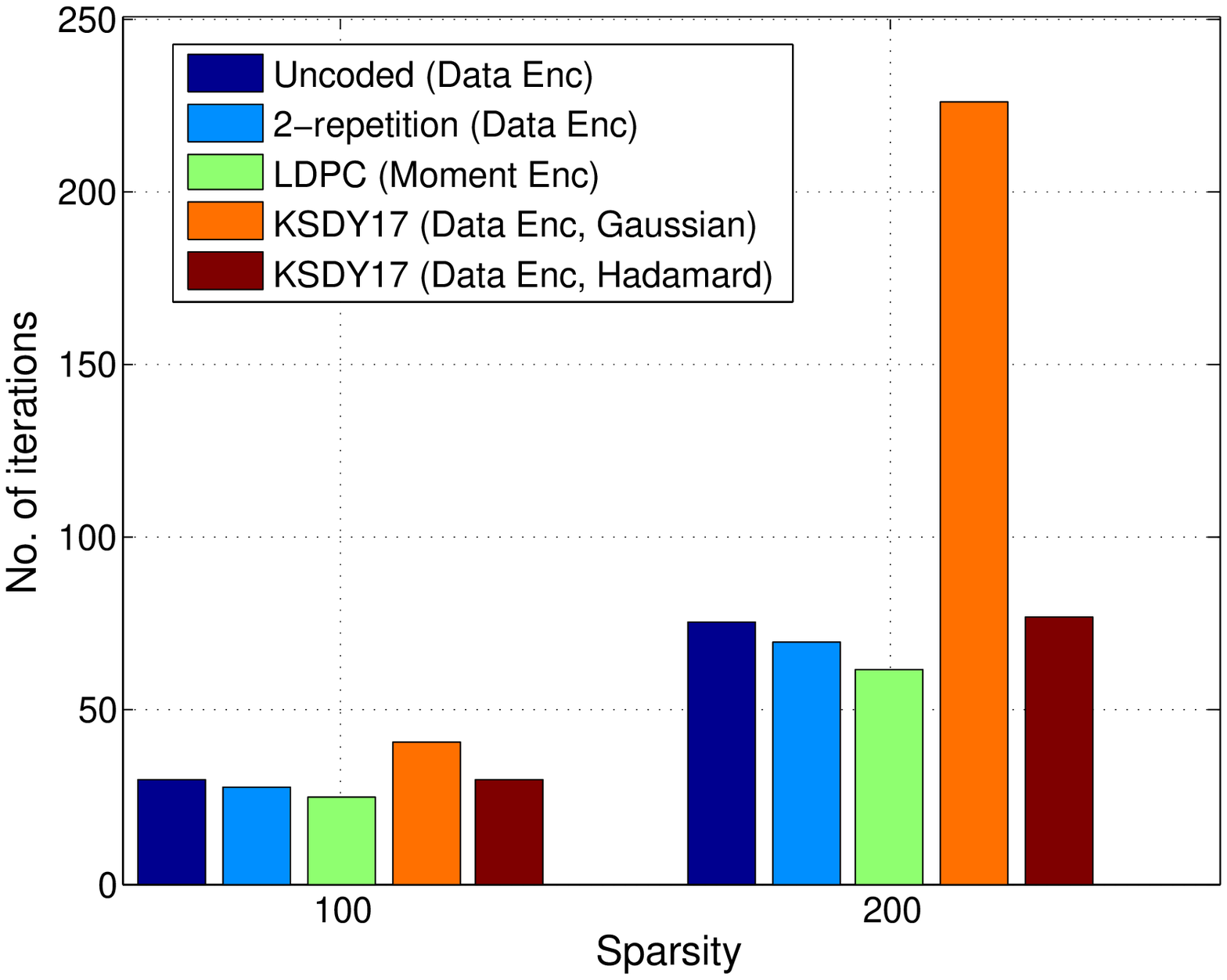}
                \label{fig:HDSparse35_iter}
        \end{subfigure}%
       ~~
        \begin{subfigure}[b]{0.23\textwidth}
   		\footnotesize
                \centering
                \includegraphics[width=1\textwidth]{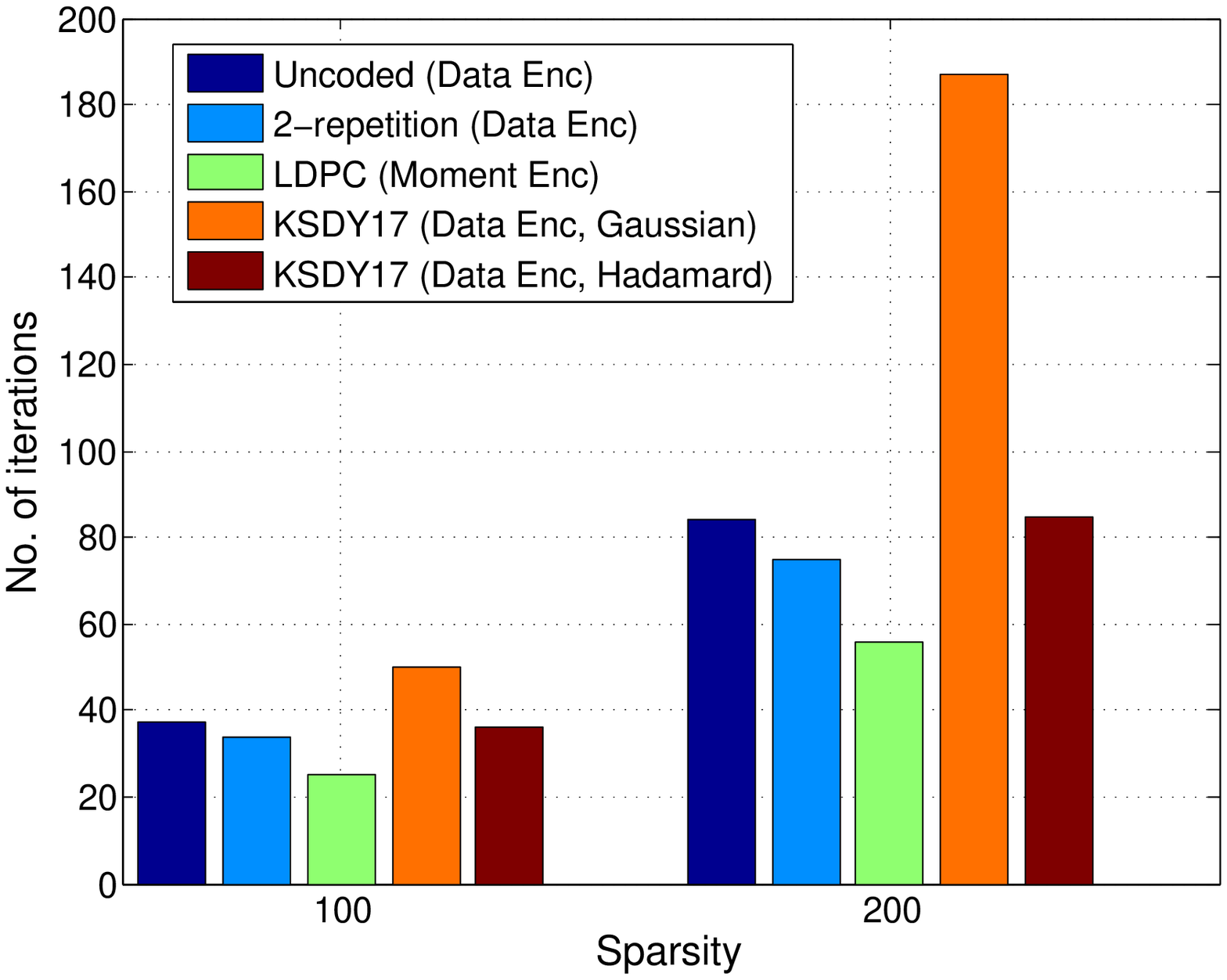}
                \label{fig:HDSparse30_iter}
        \end{subfigure}~~
        \begin{subfigure}[b]{0.23\textwidth}
                \centering
                \includegraphics[width=1\textwidth]{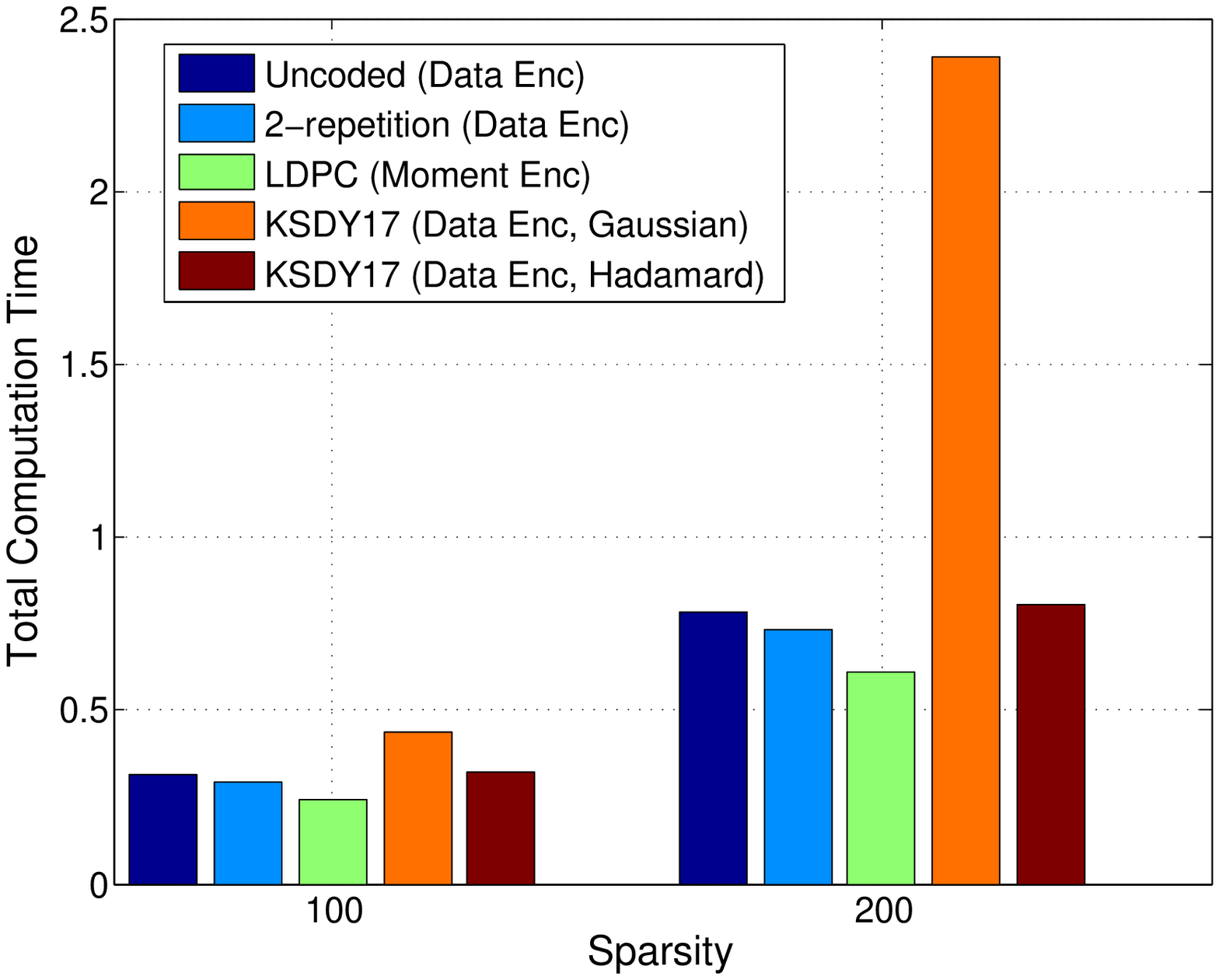}
                \label{fig:HDSparse35_time}
        \end{subfigure}%
       ~~
        \begin{subfigure}[b]{0.23\textwidth}
   		\footnotesize
                \centering
                \includegraphics[width=1\textwidth]{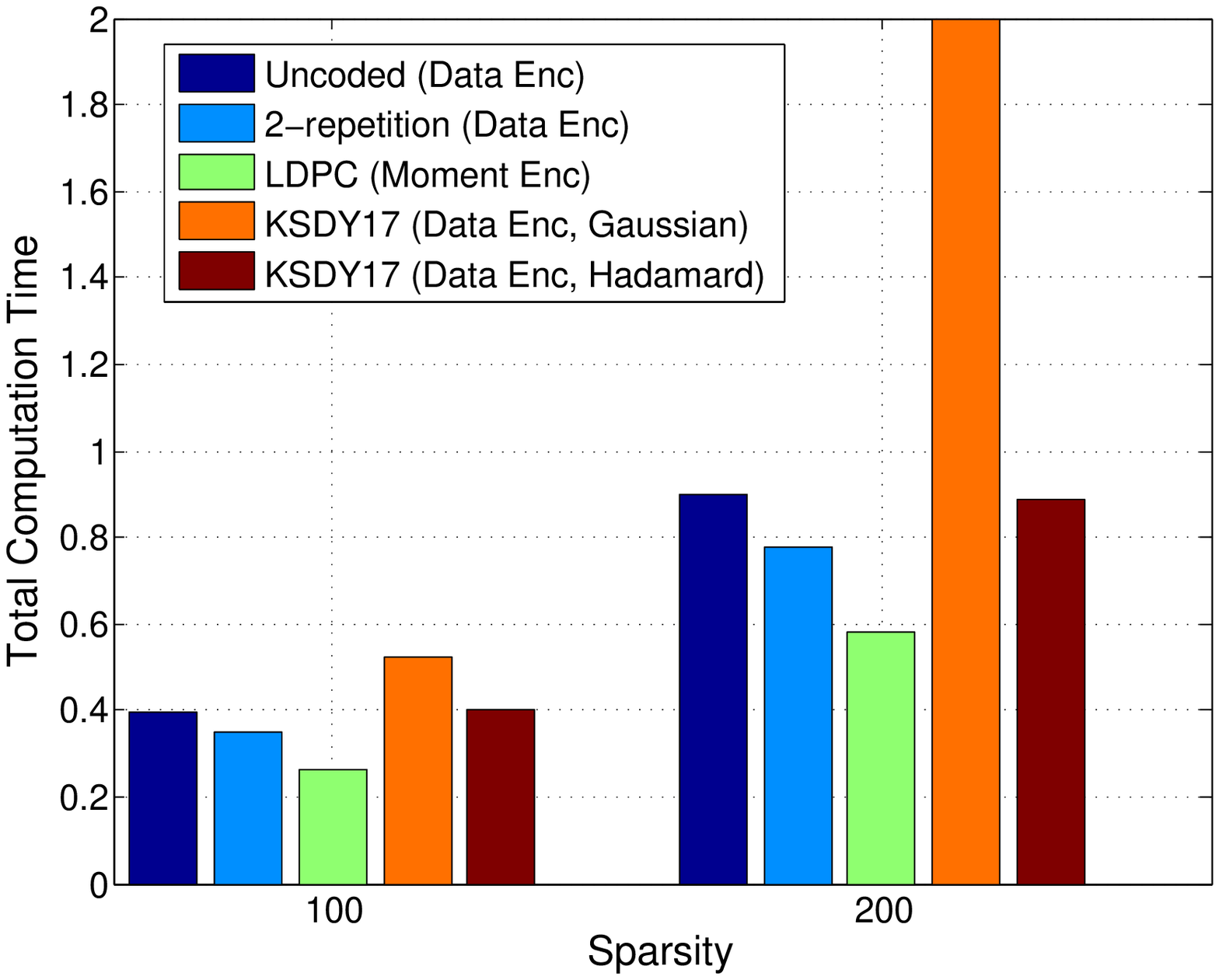}
                \label{fig:HDSparse30_time}
        \end{subfigure}
        \caption{\small Number of iterations and computation time for the sparse recovery problem in an underdetermined system $(k = 2000, m = 1024)$. The number of stragglers are 5, 10, 5 and 10 from left to right. \label{fig:SRunder}}                
\end{figure*}

Figure~\ref{fig:ls} presents the results for the least-square estimation problem. In our experiments, the data samples $X\equiv\{x_1,x_2,\ldots x_m \} \subset \mathbb{R}^k$ are randomly generated  with the dimensions $k \in \{200,400,800,1000\}$ and the number of total samples $m=2048$. The corresponding labels $\yv$ are created by multiplying the data matrix $X$ with randomly drawn vector $\bm{\theta}^{\ast} \in \mathbb{R}^k$. We implement 
Scheme \ref{sch:ldlc} on a real-life distributed computing framework (swarm2) at the University of Massachusetts Amherst  \cite{swarm2} using mpi4py Python package. The setup involves a cluster of $41$ computing nodes ($40$ worker nodes and $1$ master nodes). Throughout this section, the plotted results are averaged over $100$ trials. We compare our LDPC codes based (rate$=1/2$) moment encoding scheme with the recently proposed data encoding (with MDS/Gaussian matrices) scheme of Karakus et al. (KSDY17 in the figures)  \cite{KarakusSDY17}, as well as with uncoded and replication-based schemes ($2$-replication).\footnote{\color{black} Here, we do not compare our scheme with the approaches proposed in \cite{TLDK17} and \cite{LLPPR15} as both of these schemes involve significantly different computation and communication requirements (cf.~the supplementary material). For example, the gradient coding scheme~\cite{TLDK17} requires communicating $k$-dimensional vectors; and the approach of \cite{LLPPR15} involves encoding of two different matrices and two rounds of communications per step of the optimization procedure.} In all cases, we wait for either $30$ or $35$ workers to respond before the computations at the master node, i.e., the number of stragglers is $10$ or $5$, respectively. In order to implement our scheme, we utilize a $(40,20)$ LDPC code. In the replication-based schemes, we partition the data and repeat each partition of the data twice. We use sub-sampled Hadamard and Gaussian matrices to implement the data encoding method from \cite{KarakusSDY17}. We sampled the columns of $4096\times 4096$ Hadamard matrix and generated $4096\times 2048 $ random Gaussian matrices for the purpose of our experiments. For each case we record the number of steps until the Euclidean distance of the evaluated parameter from the actual parameter vector $\bm{\theta}^{\ast}$ is within a small threshold.

For the sparse recovery problem, we consider both the overdetermined $(m > k)$ and the underdetermined $(m < k)$ cases. For $m > k$, we adopt the same experimental setup as described above with $m = 2048$, but  restrict ourselves to the dimensions $k \in \{800, 1000\}$. For each  $k$, we consider different sparsities: for $f \in \{0.1, 0.2, 0.3, 0.4, 0.5\}$, $u = k\cdot f$ entries in $\bm{\theta}^{\ast}$ are nonzero. Figure~\ref{fig:SRover} presents the results for the sparse recovery problem in this overdetermined setup. We only plot the number of steps of the optimization procedure. The total computation time shows a similar trend.
For $m < k$, we generate the matrix $X$ 
as a $1024 \times 2000$ matrix with i.i.d. entries distributed according to the standard normal distribution. The true parameter vector $\bm{\theta}^{\ast}$ is drawn randomly with sparsity levels $u \in \{100, 200\}$. The results obtained from our experiments 
are presented in Figure~\ref{fig:SRunder}.        
As it is evident from the plots in Figure~\ref{fig:ls}, \ref{fig:SRover} and \ref{fig:SRunder}, our scheme requires smaller number of steps to converge to the true model parameters. Furthermore, our scheme also leads to smaller overall computation time.


\paragraph{Conclusion and future directions.}
In this paper we have proposed to encode the second moment of data for the purpose of running a distributed gradient descent algorithm. However our encoding is tailored for the 
squared-loss function (indeed, otherwise the second moment of data would not appear in the gradient). Our approach can be generalized to other  loss functions - such as logarithmic loss, or the Poisson loss function - relevant to various machine learning tasks. It would be an interesting future work to see what functional of the data needs to be encoded in those cases such that computation and communication overheads are minimized. 

%


%
\subsection*{Acknowledgements}

This work is supported in part by National Science Foundation  awards CCF 1642658 (CAREER) and  CCF 1618512.



\newpage

{
\bibliographystyle{plain}
\bibliography{Coded_computation}
}

%
%
%
%
%
%
%
%




\end{document}